\title{Merging and Comparing Ontologies Algebraically}
\author{Xiuzhan Guo, Arthur Berrill, Ajinkya Kulkarni, Kostya Belezko, Min Luo}
\date{}
\newcommand{\rw}{\rightarrow}
\newcommand{\cupdot}{\mathbin{\mathaccent\cdot\cup}}
\newtheorem{theorem}{Theorem}[section]    
\newtheorem*{theorem*}{Theorem}
\newtheorem{lemma}[theorem]{Lemma}   
\newtheorem{preremark}[theorem]{Remark}   
\newtheorem{prexample}[theorem]{Example}   
\newtheorem{proposition}[theorem]{Proposition}
\newtheorem{definition}[theorem]{Definition}
\newenvironment{remark}{\begin{preremark}\rm}{\end{preremark}}
\newenvironment{example}{\begin{prexample}\rm}{\end{prexample}}
\begin{document}

%\label{firstpage}
\maketitle

\begin{abstract}
Ontology operations, e.g., aligning and merging, were studied
and implemented extensively in different settings, such as, categorical operations, relation algebras, 
typed graph grammars, with different concerns.
However, aligning and merging operations in the settings share some generic properties, e.g., 
idempotence, commutativity, associativity, and representativity, labeled by
$\eqref{eqn:i}$,  $\eqref{eqn:c}$, $\eqref{eqn:a}$, $\eqref{eqn:r}$, respectively,
which are defined on an ontology merging system $(\mathfrak{O},\sim,\merge)$,
where $\mathfrak{O}$ is a non-empty set of the ontologies concerned, 
$\sim$ is a binary relation on $\mathfrak{O}$ modeling ontology aligning 
and $\merge$ is a partial binary operation on $\mathfrak{O}$ modeling ontology merging.
Given an ontology repository, a finite set $\mathbb{O}\subseteq \mathfrak{O}$, 
its merging closure $\widehat{\mathbb{O}}$ is the smallest set of ontologies, which contains the repository
and is closed with respect to merging.
If $\eqref{eqn:i}$, $\eqref{eqn:c}$, $\eqref{eqn:a}$, and $\eqref{eqn:r}$ are satisfied,
then both $\mathfrak{O}$ and $\widehat{\mathbb{O}}$ are partially ordered naturally by merging
so that the ontologies can be compared, sorted, and selected,
$\widehat{\mathbb{O}}$ is finite and can be computed efficiently, including sorting, selecting, 
and querying some specific elements, e.g., maximal ontologies and minimal ontologies.
We also show that the ontology merging system, given by ontology $V$-alignment pairs and pushouts,
satisfies the properties: $\eqref{eqn:i}$, $\eqref{eqn:c}$, $\eqref{eqn:a}$, and $\eqref{eqn:r}$ so that
the merging system is partially ordered and the merging closure of a given repository with respect to pushouts 
can be computed efficiently.
\end{abstract}

%\tableofcontents{}

\pagenumbering{arabic}

\noindent {\bf Key words}: {\em Ontology aligning, ontology merging, merging system, poset, merging closure, ontology $V$-alignment pair, ontology homomorphism, 
ontology $V$-alignment pair homomorphism, pushout.}

%%%%%%%%%%%%%%%%%%%%%%%%%%%%%%%%%%%%%%%%%%%%%%%%%%%%%%%%%%%
\section{Introduction}
Ontology has been playing important roles in many areas, 
such as, database integration, peer-to-peer systems, e-commerce, semantic web services, social networks, etc.
Individual ontologies, however, are not enough to provide the full functionalities.
To reuse existing ontologies, one usually wants
to integrate different ontologies together. This can be done by combining
the ontologies in which they are merged, operated and transformed into one new ontology. 
In this case, the ontologies have to be aligned with mutual agreement 
from different ontologies with heterogeneous specifications \cite{zkeh}.

Ontology operations, e.g., aligning and merging, are studied
and implemented extensively in different settings, such as, 
categorical operations \cite{aa, ch, cmk14, cmk17, hw, keg, map, zkeh}, relation algebras \cite{e}, 
typed graph grammars \cite{mtfh}.
In \cite{zkeh}, the concept of ontology alignment was formalized by an ontology $V$-alignment pair 
$(r_1:B\rw O_1, r_2:B\rw O_2)$ with ontology homomorphisms $r_1:B\rw O_1$ and $r_2:B\rw O_2$
while merging two ontologies $O_1$ and $O_2$, for which there is an ontology $V$-alignment pair 
$(r_1:B\rw O_1, r_2:B\rw O_2)$, was obtained by the categorical colimit, 
the pushout of $r_1$ and $r_2$, in the category of the ontologies concerned there.

Recall that a {\em groupoid} $(G,\circ)$ is a non-empty set $G$ equipped with a binary operation $\circ$ that is a function from 
$G\times G$ to $G$ and a {\em semigroup} $(G,\circ)$ is a groupoid $(G,\circ)$ with an {\em associative} binary operation $\circ$.
In some cases, some binary operations on a non-empty set $G$ is {\em partial}, namely, 
the operations are defined only on a subset of $G\times G$.
For example, given two matrices $M$ and $N$,
$M\cdot N$ is defined only when the number of columns in $M$ equals the number of rows in $N$.
Ontology merging can be viewed as a partial binary operation that is defined only on ontology $V$-alignment pairs
and so the ontologies concerned, along with their merging, form a partial algebraic structure, e.g., a partial groupoid or a partial semigroup, depending on the associativity of the merging operation.
Hence we can study the ontology aligning and merging operations together within the partial groupoid or semigroup 
using the properties the operations share.

A database can grow quickly from a finite set of data tables by generating new tables 
using the table operations, such as, join, select, combine, etc.
Given an ontology {\em repository} or {\em instance} 
$\mathbb{O}$, a finite set of ontologies, it is natural to ask {\em how many new ontologies can be generated from the repository
by merging and how to compare and rank these ontologies.}
The main objective of this paper is to answer the natural questions by studying the properties ontology aligning and merging operations share and {\em ontology merging closures} defined in Section \ref{section:closures} below.
Even though ontology aligning and merging operations have been formalized in different settings, we shall not study the ontology  aligning and merging operations with
any specific setting and the internal details of the operations 
but we shall use the generic algebraic properties, introduced in Section  \ref{section:properties} below, 
ontology aligning and merging share, to provide
a generic setting for studying ontology aligning and merging operations.

The rest of the paper is organized as  follows:
First, in Section \ref{section:properties},
we introduce ontology merging systems and investigate the properties defined on an ontology merging system,
which ontology aligning and merging operations share, such as, 
idempotence, commutativity, associativity, and representativity, denoted by 
$\eqref{eqn:i}$, $\eqref{eqn:c}$, $\eqref{eqn:a}$, $\eqref{eqn:r}$, respectively,
described in Example \ref{exam:mergingsys}.

In a given database, we may concern some specific tables, e.g.,
the tables containing maximal information, or smallest atomic tables.
Given a set of ontologies, to query some specific ontologies from the set of the ontologies concerned,
we need to compare and rank the ontologies in the set by a {\em partial order} on the set.
If the merging operation is associative, then the set of the ontologies concerned, along with merging, 
forms a partial semigroup. Following partial order approaches on semigroups, in Section \ref{section:naturalorder} we define 
the {\em natural partial order} given by the merging operation 
(Proposition \ref{proposition:naturalposet}).
We show that a given partial order on the ontologies must be the natural partial order if some reasonable properties are satisfied (Theorem \ref{thm:partialordersequal}).

In Section \ref{section:closures},
we view the repository $\mathbb{O}$ of ontologies as the generators in a partial algebraic environment and 
treat the merge closure of $\mathbb{O}$, the smallest set of ontologies, which contains $\mathbb{O}$ 
and is closed with respect to merging, 
as the subalgebraic structure generated by the repository $\mathbb{O}$.
We prove that the merging closure of a given repository is partially ordered and finite
under some reasonable conditions (Theorem \ref{thm:finiteness}) 
so that we can compare, sort, and select these ontologies
generated from the repository, using sorting and selection algorithms in partially ordered sets.

Section \ref{section:pushouts} shows that the ontology merging system, given by ontology $V$-alignment and pushouts,
satisfies the algebraic properties introduced in 
Sections \ref{section:properties} and \ref{section:naturalorder} (Theorem \ref{theorem:pushoutsall}) 
so that there are efficient algorithms to 
compute merging closures and query specific ontologies generated by pushouts.

Finally, we provide our concluding remarks in Section \ref{section:conclusions}.

%%%%%%%%%%%%%%%%%%%%%%%%%%%%%%%%%%%%%%%%%%%%%%%%%%%%%%%%%%%%%%
%%%%%%%%%%%%%%%%%%%%%%%%%%%%%%%%%%%%%%%%%%%%%%%%%%%%%%%%%%%%%%
\section{Ontology Merging Systems and Properties}\label{section:properties}
Ontology aligning and merging were studied and implemented in different settings extensively 
\cite{cmk14, cmk17, e, es, hkes, mfh, mtfh, szs, zkeh}.
However, the aligning and merging operations in the different settings share some algebraic properties.
In this section, we shall model ontology aligning and merging operations together by {\em ontology merging systems} and 
introduce the algebraic properties, defined on an ontology merging system,  ontology aligning and merging operations share.
We shall not specify any ontology setting with ontology internal details.

Let $\mathfrak{O}$ be the non-empty set of the ontologies concerned, e.g., 
the set of the objects of the category $\mathfrak{Ont}^+$ of the ontologies defined in \cite{zkeh},
the ontology structures considered in \cite{ch}, or the geospatial data ontologies in \cite{szphwls}.
Let $\sim$ be a binary relation on $\mathfrak{O}$ that models a generic ontology alignment relationship
and $\merge$ a partial binary operation on $\mathfrak{O}$ that models a merging operation defined on alignment pairs:
For all $O_1,O_2\in \mathfrak{O}$, $O_1\merge O_2$ exists if $O_1\sim O_2$
and $O_1\merge O_2$ is undefined, denoted by $O_1\merge O_2 =\;\uparrow$, otherwise.
$(\mathfrak{O}, \sim, \merge)$ forms an {\em ontology merging system}.
$(\mathfrak{O}, \sim, \merge)$  is {\em total} if $O_1\merge O_2\neq \;\uparrow$ for all $O_1,O_2\in \mathfrak{O}$.
Throughout, $(\mathfrak{O}, \sim,\merge)$ is an ontology system. 

The {\em null extension} $(\mathfrak{O}^{\uparrow}, \sim^{\uparrow}, \merge^{\uparrow})$ 
of $(\mathfrak{O}, \sim, \merge)$ is defined by 
\begin{itemize}
\item
$\mathfrak{O}^{\uparrow}=\mathfrak{O} \cupdot \{\uparrow \}$;
\item
For all $O_1,O_2,O\in \mathfrak{O}$, if $O_1\sim O_2$ then $O_1\sim^{\uparrow} O_2$, 
$O\nsim^{\uparrow} \;\uparrow\;\nsim^{\uparrow} O$, and $\uparrow \;\sim^{\uparrow} \;\uparrow$;
\item
For all $O_1,O_2\in \mathfrak{O}$ such that $O_1\sim O_2$, 
$O_1\merge^{\uparrow} O_2=O_1\merge O_2$. For all $O_1,O_2,O\in \mathfrak{O}$ such that $O_1\nsim O_2$,
$$O_1\merge^{\uparrow}O_2=O_2\merge^{\uparrow}O_1=O\;\merge^{\uparrow}\uparrow \;= \;\uparrow \merge^{\uparrow}O
=\;\uparrow\merge^{\uparrow}\uparrow \;= \;\uparrow.$$
\end{itemize}

We begin with the following algebraic properties on $(\mathfrak{O}, \sim, \merge)$:
%%%%%%%%%%%%%%%%%%%%%%%%%%%%%%%%%%%%%%%%%%%%%%%%%%%%%%%%%%%
\begin{description}
\item[{Idempotence:}] 
For all $O\in \mathfrak{O}$,
\begin{equation}\label{eqn:i}
\mbox{$O \sim O$ and $O\merge O  = O$.}\tag{I} 
\end{equation}
An ontology always aligns with itself, and merging it with itself yields the same ontology.
\item[{Commutativity:}] 
For all $O_1, O_2\in \mathfrak{O}, O_1 \sim O_2 $ if and only if $O_2 \sim O_1$, and if $O_1 \sim O_2$,  
\begin{equation}\label{eqn:c}
O_1\merge O_2 = O_2\merge O_1.\tag{C}
\end{equation}
Aligning is symmetric and merging does not depend on the order of the ontologies being merged.
\end{description}
In general, the associativity of a binary operation means that rearranging the parentheses in an expression will  produce 
the same result.
There are many ways to specify the associative property.
In this paper, we are interested in the following three associative properties:
$\eqref{eqn:a}$, $\eqref{eqn:cass}$, and $\eqref{eqn:sass}$.
\begin{description}
\item[{Associativity in \cite{BGSWW}:}] 
For all $O_1 , O_2 , O_3\in \mathfrak{O}$ such that $O_1\merge (O_2\merge O_3)$ and $(O_1\merge O_2)\merge O_3$ exist,
\begin{equation}\label{eqn:a}
O_1\merge (O_2\merge O_3)=(O_1\merge O_2)\merge O_3\neq \;\uparrow.\tag{A}
\end{equation}
Merging three ontologies $O_1$, $O_2$, $O_3$ yields the same result regardless of the orders in which $O_1$, $O_2$, $O_3$
are merged when both $O_1\merge (O_2\merge O_3)$ and $(O_1\merge O_2)\merge O_3$ exist.
\item[{Catenary associativity:}]
For all $O_1, O_2, O_3 \in \mathfrak{O}$ such that $O_1\merge O_2$ and $O_2\merge O_3$ exist,
\begin{equation}\label{eqn:cass}
(O_1\merge O_2)\merge O_3=O_1\merge (O_2\merge O_3)\neq \;\uparrow.\tag{CA}
\end{equation}
Merging three ontologies $O_1$, $O_2$, $O_3$ yields the same result regardless of the orders in which $O_1$, $O_2$, $O_3$
are merged when both $O_1\merge O_2$ and $O_2\merge O_3$ exist.
\item[{Strong associativity:}]
For all $O_1, O_2, O_3\in \mathfrak{O}$,
\begin{equation}\label{eqn:sass}
(O_1\merge O_2)\merge O_3=O_1\merge (O_2\merge O_3)\tag{SA}
\end{equation}
in the sense that both sides of the equation yield the same value or both are simultaneously undefined.
It is equivalent to
either of $(O_1\merge O_2)\merge O_3$ and $O_1\merge (O_2\merge O_3)$ is defined so is the other and they are equal.
\item[{Left representativity:}] 
Let $O_1,O_2, O_3 \in \mathfrak{O}$ such that $O_3 = O_1\merge O_2$.
\begin{equation}\label{eqn:rl}
\mbox{For all $O\in \mathfrak{O}$ such that  $O\sim O_1$, $O\sim O_3$.}\tag{$\mbox{R}_l$}
\end{equation}
\item[{Right representativity:}] 
Let $O_1,O_2, O_3 \in \mathfrak{O}$ such that $O_3 = O_1\merge O_2$.
\begin{equation}\label{eqn:rr}
\mbox{For all $O\in \mathfrak{O}$ such that  $O_1 \sim O$, $O_3 \sim O$.}\tag{$\mbox{R}_r$}
\end{equation}
\item[{Representativity:}] 
\begin{equation}\label{eqn:r}
\mbox{R$_l$ and R$_r$.}\tag{R}
\end{equation}
The ontology $O_3$ obtained from merging ontologies $O_1$ and $O_2$,
represents the original $O_1$ and $O_2$:
the ontology $O$ that has matched $O_1$ will also match $O_3$ from both left and right sides.
\end{description}

Some relationships of the above ontology properties are summarized in the following proposition.
The proofs for the proposition and subsequent results will be provided in Appendices $\ref{append:lemmas}$ and $\ref{append:theorems}$.
%%%%%%%%%%%%%%%%%%%%%%%%%%%%%
\begin{proposition}\label{proposition:assrel}
Let $(\mathfrak{O}, \sim,\merge)$ be an ontology merging system.
\begin{enumerate}[$(i)$]
\item\label{remark: assimplya}
If $(\mathfrak{O},\sim,\merge)$ satisfies $\eqref{eqn:sass}$, then $(\mathfrak{O},\sim,\merge)$ satisfies $\eqref{eqn:a}$;
\item\label{remark: cassimplya}
If $(\mathfrak{O},\sim,\merge)$ satisfies $\eqref{eqn:cass}$, then $(\mathfrak{O},\sim,\merge)$ satisfies $\eqref{eqn:a}$;
\item\label{remark: assequiv}
If $(\mathfrak{O},\sim,\merge)$ satisfies $\eqref{eqn:c}$, then
$\eqref{eqn:cass}$ is equivalent to $\eqref{eqn:a}$ and $\eqref{eqn:r}$;
\item\label{remark: assextension}
$(\mathfrak{O},\sim,\merge)$ satisfies $\eqref{eqn:sass}$ if and only if the null extension $(\mathfrak{O}^{\uparrow}, \sim^{\uparrow}, \merge^{\uparrow})$ is a total ontology merging system such that
$(O_1\merge^{\uparrow}O_2) \merge^{\uparrow}O_3=O_1 \merge^{\uparrow}(O_2 \merge^{\uparrow} O_3)$ for all $O_1,O_2,O_3\in \mathfrak{O}^{\uparrow}$, namely, $(\mathfrak{O}^{\uparrow},\merge^{\uparrow})$ is a semigroup.
\end{enumerate}
\end{proposition}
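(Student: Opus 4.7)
My plan is to treat the four claims essentially independently, with parts (i) and (ii) being unpackings of definitions, part (iii) being the only step that requires real work, and part (iv) being a case analysis on whether operands equal $\uparrow$.

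For part (i), the observation is that $\eqref{eqn:sass}$ already asserts the stronger statement that both associated expressions are simultaneously defined or undefined and agree when defined; so if we further assume both exist, which is the hypothesis of $\eqref{eqn:a}$, the equality is immediate and both values lie in $\mathfrak{O}$ (not equal to $\uparrow$). For part (ii), given the hypothesis of $\eqref{eqn:a}$ that both $(O_1\merge O_2)\merge O_3$ and $O_1\merge(O_2\merge O_3)$ exist, we note that the existence of the outer products forces $O_1\merge O_2$ and $O_2\merge O_3$ to exist separately; so $\eqref{eqn:cass}$ applies directly and yields the desired equality.

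For part (iii), I would prove both implications separately under the standing assumption $\eqref{eqn:c}$. For the forward direction, $\eqref{eqn:cass}\Rightarrow\eqref{eqn:a}$ is immediate from part (ii), so it suffices to derive $\eqref{eqn:r}$. Suppose $O_3=O_1\merge O_2$ and $O\sim O_1$. Then $O\merge O_1$ and $O_1\merge O_2$ both exist, so $\eqref{eqn:cass}$ gives $(O\merge O_1)\merge O_2 = O\merge(O_1\merge O_2) = O\merge O_3\neq\;\uparrow$, hence $O\sim O_3$, which is $\mbox{R}_l$; then $\mbox{R}_r$ follows by applying $\mbox{R}_l$ after rewriting via $\eqref{eqn:c}$. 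For the backward direction, assume $\eqref{eqn:a}$ and $\eqref{eqn:r}$ and suppose $O_1\merge O_2$ and $O_2\merge O_3$ both exist; the task is to show that $(O_1\merge O_2)\merge O_3$ and $O_1\merge(O_2\merge O_3)$ both exist, since then equality follows from $\eqref{eqn:a}$. For the first, apply $\mbox{R}_r$ to $O_1\merge O_2 = O_2\merge O_1$ (using $\eqref{eqn:c}$) with the witness $O_2\sim O_3$ to conclude $(O_1\merge O_2)\sim O_3$; for the second, apply $\mbox{R}_l$ to $O_2\merge O_3$ with witness $O_1\sim O_2$ to conclude $O_1\sim(O_2\merge O_3)$.

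Part (iv) reduces to a case analysis on whether any of $O_1,O_2,O_3\in\mathfrak{O}^{\uparrow}$ equals $\uparrow$, or whether intermediate products fall outside $\mathfrak{O}$. In every case where an operand is $\uparrow$, both sides of the associativity equation collapse to $\uparrow$ by the definition of the null extension. The only remaining case has $O_1,O_2,O_3\in\mathfrak{O}$, and then $\merge^{\uparrow}$ agrees with $\merge$ when defined and produces $\uparrow$ otherwise, which is precisely the content of $\eqref{eqn:sass}$ for the original system; so the equivalence with $(\mathfrak{O}^{\uparrow},\merge^{\uparrow})$ being a semigroup is forced.

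The main obstacle is the backward direction of part (iii): showing definedness of both triple products requires combining $\eqref{eqn:c}$, $\mbox{R}_l$, and $\mbox{R}_r$ in the right order, whereas the other parts are essentially bookkeeping. Once definedness is established, $\eqref{eqn:a}$ delivers equality and nonemptiness for free.
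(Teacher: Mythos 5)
Your proposal is correct and follows essentially the same route as the paper's proof: parts (i) and (ii) by unpacking definedness, part (iii) by deriving $\eqref{eqn:r}$ from $\eqref{eqn:cass}$ via the triple $(O,O_1,O_2)$ in one direction and using $\eqref{eqn:c}$ together with both halves of $\eqref{eqn:r}$ to establish definedness of the two triple products before invoking $\eqref{eqn:a}$ in the other, and part (iv) by the case split on whether an operand equals $\uparrow$. The only (harmless) divergence is that you attribute the first conclusion in part (iii) to $\mbox{R}_l$ where the paper's text labels it $\mbox{R}_r$; your labeling is the one consistent with the stated definitions.
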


%%%%%%%%%%%%%%%%%%%%%%%%%%%%%%%%%%%%%%%%%%%%%%%%%%%%%%%%%%%%%%%%%%%%%%%%%%
%%%%%%%%Examples
\begin{example}\label{exam:mergingsys}
\begin{enumerate}[$1.$]
\item
Define $(\mathfrak{O},\asymp,\cupdot)$ by
\begin{itemize}
\item
For all $O_1,O_2\in \mathfrak{O}$, $O_1\asymp O_2$,
\item
$O_1\cupdot O_2$ is the disjoint union of $O_1$ and $O_2$.
\end{itemize}
It is straightforward to verify that $(\mathfrak{O},\asymp,\cupdot)$ is an ontology system satisfying 
$\eqref{eqn:c}$, $\eqref{eqn:a}$, $\eqref{eqn:cass}$, $\eqref{eqn:sass}$, and $\eqref{eqn:r}$ but not $\eqref{eqn:i}$.
\item
Let $\mathfrak{G}$ be the set of graphs that represent the ontologies concerned as in \cite{mtfh}. Define 
$(\mathfrak{G},\sim,\between)$ by
\begin{itemize}
\item
For all $G_1,G_2\in \mathfrak{G}$, $G_1\sim G_2$ if and only if there exists an overlapping subgraph $S$ (up to graph isomorphism) 
of $G_1$ and $G_2$,
\item
$G_1\between G_2$ is the union of $G_1$ and $G_2$ by identifying their overlapping subgraph $S$ when $G_1\sim G_2$.
\end{itemize}
Then $(\mathfrak{G},\sim,\between)$ is an ontology merging system satisfying 
$\eqref{eqn:i}$, $\eqref{eqn:c}$, $\eqref{eqn:a}$, $\eqref{eqn:cass}$, $\eqref{eqn:sass}$, and $\eqref{eqn:r}$.
\item
In \cite{szphwls},
Sun et al. defined
$${\bf GeoDataOnt}=\{ (E, R_{(E_i,E_j)})\;|\;E_i,E_j\in E,0\leq i,j\leq |E|\},$$
where $E$ is the set of geographic entities concerned
and $R$ the set of relations between the entities from $E$.
${\bf GeoDataOnt}$ can be represented as a table of geographic entities and their relations, labeled by $E$ and $R$, respectively.
Let $\mathfrak{Geo}$ be set of such tables. For $\mathbb{G}_1,\mathbb{G}_2\in \mathfrak{Geo}$, define
$$\mathbb{G}_1\smile \mathbb{G}_2 \mbox{ for all }\mathbb{G}_1,\mathbb{G}_2 \in \mathfrak{Geo}$$ 
and
$$\mathbb{G}_1\Join \mathbb{G}_2= \mathbb{G}_1\mbox{ full join } \mathbb{G}_2.$$
Then $(\mathfrak{Geo}, \smile,\Join)$ is an ontology system satisfying all above properties: $\eqref{eqn:i}$, $\eqref{eqn:c}$, $\eqref{eqn:a}$, $\eqref{eqn:cass}$, $\eqref{eqn:sass}$, and $\eqref{eqn:r}$. 
\item
Let $(\mathfrak{O},\approx,\sqcup)$ be the ontology system, which we shall define formally in Definition \ref{def:valign} below, given by 
\begin{itemize}
\item
For all $O_1,O_2\in \mathfrak{O}$, $O_1\approx_B O_2$ if and only if there is a pair of ontology homomorphisms
$(r_1:B\rw O_1,r_2:B\rw O_2)$;
\item
If $O_1\approx_B O_2$, then $O_1\sqcup_BO_2$ is given by the pushout of $r_1$ and $r_2$.
\end{itemize} 
Then  $(\mathfrak{O},\approx,\sqcup)$ satisfies 
$\eqref{eqn:i}$, $\eqref{eqn:c}$, $\eqref{eqn:a}$, $\eqref{eqn:cass}$, $\eqref{eqn:sass}$, and $\eqref{eqn:r}$, which will be proved 
in Theorem \ref{theorem:pushoutsall} below.
\item
In \cite{ch}, Cafezeiro and Haeusler defined an ontology homomorphism between ontology structures introduced in \cite{ms}, 
as a pair of functions $(f,g)$, where $f$ is a function between the concepts
and $g$ a function between relations, which preserve the ontology structures.
They aligned ontologies by $V$-alignment pairs and merged aligned ontologies by pushouts in the category of
ontology structures.
Similarly, by Theorem \ref{theorem:pushoutsall} below,
the ontology merging system, satisfies
$\eqref{eqn:i}$, $\eqref{eqn:c}$, $\eqref{eqn:a}$, $\eqref{eqn:cass}$, $\eqref{eqn:sass}$, and $\eqref{eqn:r}$. 
\end{enumerate}
\end{example}

%%%%%%%%%%%%%%%%%%%%%%%%%%%%%%%%%%%%%%%%%%%%%%%%%%%%%%%%%%%%%%%%%%%%%%%%%%
%%%%%%%%%%%%%%%%%%%%%%%%%%%%%%%%%%%%%%%%%%%%%%%%%%%%%%%%%%%%%%%%%%%%%%%%%%
\section{Natural Partial Order Given by Merging}\label{section:naturalorder}
A partial order on a semigroup $(S,\circ)$ is called {\em natural} if it is defined by means of the operation $\circ$.
Recall that the set $E_S$ of all idempotents of $S$ is partially ordered by
$$e\leq_E f \mbox{ if and only if }e=e\circ f=f\circ e.$$
In \cite{mit}, Mitsch defined a natural partial order $\leq_M$ that extends the ordering $\leq_E$ from $E_S$ to $S$ by:
$$\mbox{for all }a,b \in S, a\leq_M b\mbox{ if and only if there exist }s,t\in S^1\mbox{ such that }a = s\circ b=b\circ t, s\circ a=a,$$
where $S^1$ is the semigroup obtained by adjoining the identity 1 to $S$.
Mitsch proved that $\leq_M$ is equal to $\leq_E$ when restricted to $E_S$ \cite{mit}.

Ontology merging $\merge$ aims to obtain more information by combining the aligned ontologies together. 
The natural ontology partial order $O_1\leq_{\merge} O_2$ is defined 
if merging $O_1$ to $O_2$ does not yield the more information than $O_2$.
Clearly $\leq_{\merge}$ is the dual order of $\leq_E$ when restricted to the idempotents.
%%%%%%%%%%%%%%%%%%%%%%%%%%%%%
\begin{definition}
For all $O_1, O_2\in \mathfrak{O}$, $O_1\leq_{\merge} O_2$ if and only if $O_1\sim O_2$,  $O_2\sim O_1$, and $O_1\merge O_2 =O_2\merge O_1= O_2$.
\end{definition}

The following proposition shows that $(\mathfrak{O},\leq_{\merge})$ is a partially ordered set (poset), namely, $\leq_{\merge}$  is 
a reflexive, antisymmetric, and transitive binary relation on $\mathfrak{O}$, when $\eqref{eqn:i}$ and $\eqref{eqn:cass}$ are satisfied.
%%%%%%%%%%%%%%%%%%%%%%%%%%%%%
\begin{proposition}\label{proposition:naturalposet}
If $(\mathfrak{O}, \sim, \merge)$ satisfies $\eqref{eqn:i}$ and $\eqref{eqn:cass}$,
then $\leq_{\merge}$ is a partial order on $\mathfrak{O}$ and so $(\mathfrak{O},\leq_{\merge})$ is a poset.
\end{proposition}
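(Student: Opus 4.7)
The plan is to verify each of the three axioms of a partial order directly from the definition of $\leq_{\merge}$, using $\eqref{eqn:i}$ for reflexivity, a short algebraic cancellation for antisymmetry, and $\eqref{eqn:cass}$ twice for transitivity. None of these steps looks subtle, and I do not expect any real obstacle; the only thing that requires care is checking that all merges whose values we invoke are indeed defined before we apply catenary associativity.

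First I would dispatch reflexivity. By $\eqref{eqn:i}$, for every $O\in\mathfrak{O}$ we have $O\sim O$ and $O\merge O=O$, so both the alignment clauses and the merging clause in the definition of $O\leq_{\merge} O$ hold immediately.

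Next I would handle antisymmetry. Suppose $O_1\leq_{\merge} O_2$ and $O_2\leq_{\merge} O_1$. The first gives $O_1\merge O_2=O_2$ and the second gives $O_1\merge O_2=O_2\merge O_1=O_1$, so $O_1=O_2$. No use of associativity or idempotence is needed here.

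The main step is transitivity. Assume $O_1\leq_{\merge}O_2$ and $O_2\leq_{\merge}O_3$; I must show $O_1\sim O_3$, $O_3\sim O_1$, and $O_1\merge O_3=O_3\merge O_1=O_3$. Since $O_1\merge O_2$ and $O_2\merge O_3$ both exist (they equal $O_2$ and $O_3$ respectively), $\eqref{eqn:cass}$ applies to the triple $(O_1,O_2,O_3)$, giving
\[
O_1\merge O_3 \;=\; O_1\merge(O_2\merge O_3) \;=\; (O_1\merge O_2)\merge O_3 \;=\; O_2\merge O_3 \;=\; O_3,
\]
which is defined, hence also forces $O_1\sim O_3$. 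Symmetrically, $O_3\merge O_2$ and $O_2\merge O_1$ exist, so $\eqref{eqn:cass}$ applied to $(O_3,O_2,O_1)$ yields
\[
O_3\merge O_1 \;=\; (O_3\merge O_2)\merge O_1 \;=\; O_3\merge(O_2\merge O_1) \;=\; O_3\merge O_2 \;=\; O_3,
\]
giving $O_3\sim O_1$ as well. Combining, $O_1\leq_{\merge} O_3$, and the three axioms together show $(\mathfrak{O},\leq_{\merge})$ is a poset. The only thing to watch is the direction in which $\eqref{eqn:cass}$ is invoked: it requires the two adjacent merges $O_i\merge O_{i+1}$ to be defined, which is exactly what our hypotheses supply.
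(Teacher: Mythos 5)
Your proof is correct and follows essentially the same route as the paper's: reflexivity from $\eqref{eqn:i}$, antisymmetry by direct cancellation, and transitivity via the two applications of $\eqref{eqn:cass}$ to the triples $(O_1,O_2,O_3)$ and $(O_3,O_2,O_1)$, with the same care that the hypothesized merges exist before invoking catenary associativity.
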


Since $(\mathfrak{O},\leq_{\merge})$ is a poset if $\eqref{eqn:i}$ and $\eqref{eqn:cass}$ are satisfied,
we can compare ontologies in $\mathfrak{O}$ and apply the sorting and selection algorithms 
in posets and for transitive relations \cite{dkmrv} 
to query the specific ontologies in $\mathfrak{O}$, e.g., maximal ontologies, minimal ontologies.

Let $\preceq$ be a given partial order on $\mathfrak{O}$. 
We consider the following properties with respect to $\preceq$:
%%%%%%%%%%%%%%%%%%%%%%%%%%%%
\begin{description}
\item[{Merge gives the least upper bound:}]
For all ontologies $O_1$, $O_2$ such that $O_1\sim O_2$,
\begin{equation}\label{eqn:lub}
O_1\merge O_2\mbox{ is the least upper bound of }O_1\mbox{ and }O_2\mbox{ with respect to }\preceq.\tag{LU}
\end{equation}
\item[{$\preceq$ is left compatible with $\sim$ and $\merge$}:]
For all ontologies $O_1$, $O_2$, $O$ such that $O_1\preceq O_2$ and $O\sim O_1$, 
\begin{equation}\label{eqn:lcomp}
\mbox{$O\sim O_2$ and $O\merge O_1\preceq O\merge O_2$.} \tag{CP$_l$}
\end{equation}
\item[{$\preceq$ is right compatible with $\sim$ and $\merge$}:]
For all ontologies $O_1$, $O_2$, $O$ such that $O_1\preceq O_2$ and $O_1\sim O$, 
\begin{equation}\label{eqn:rcomp}
\mbox{$O_2\sim O$ and $O_1\merge O\preceq O_2\merge O$.} \tag{CP$_r$}
\end{equation}
\item[{$\preceq$ is compatible with $\sim$ and $\merge$}:] If $\preceq$ is 
\begin{equation}\label{eqn:comp}
\mbox{both left and right compatible with $\sim$ and $\merge$.}\tag{CP}
\end{equation}
\end{description}

A partial order on $(\mathfrak{O},\sim,\merge)$, where $\sim$ is reflexive and commutative,  must be the natural partial order 
$\leq_{\merge}$ if 
$\eqref{eqn:lub}$ and $\eqref{eqn:comp}$ are satisfied shown in the following theorem.
%%%%%%%%%%%%%%%%%%%%%%%%%%%%%%%%%%%%%%%%%%%%%%%%%%%%%%%%%%%%%%%%%%
\begin{theorem}\label{thm:partialordersequal}
Let $\sim$ be reflexive and  commutative and $\preceq$ a partial order on $(\mathfrak{O},\sim,\merge)$. Then 
$(\mathfrak{O},\sim,\merge)$ satisfies $\eqref{eqn:lub}$ and $\eqref{eqn:comp}$ with respect to $\preceq$ if and only if 
$(\mathfrak{O},\sim,\merge)$ satisfies $\eqref{eqn:i}$, $\eqref{eqn:c}$, $\eqref{eqn:a}$, 
and $\eqref{eqn:r}$ and $\preceq\;=\;\leq_{\merge}$.
\end{theorem}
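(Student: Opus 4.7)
The plan is to prove the two implications separately, using the natural order $\leq_{\merge}$ as the bridge between the algebraic hypotheses on $(\mathfrak{O},\sim,\merge)$ and the order-theoretic hypotheses on $\preceq$.

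For the forward direction, assume $\eqref{eqn:lub}$ and $\eqref{eqn:comp}$ hold with respect to $\preceq$. I would derive the four algebraic properties in turn. Idempotence $\eqref{eqn:i}$ follows because $O\sim O$ by reflexivity, so $O\merge O$ exists and, being the least upper bound of $\{O\}$, must equal $O$ by antisymmetry. Commutativity $\eqref{eqn:c}$ follows because $O_1\merge O_2$ and $O_2\merge O_1$ are both least upper bounds of $\{O_1,O_2\}$, hence equal. For representativity $\eqref{eqn:r}$, the relation $O_1\preceq O_1\merge O_2=O_3$ comes from $\eqref{eqn:lub}$, and applying $\eqref{eqn:lcomp}$ and $\eqref{eqn:rcomp}$ to this inequality transports $\sim$ from $O_1$ over to $O_3$. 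For associativity $\eqref{eqn:a}$, whenever both $L=O_1\merge(O_2\merge O_3)$ and $R=(O_1\merge O_2)\merge O_3$ exist, iterated $\eqref{eqn:lub}$ shows each is an upper bound of $\{O_1,O_2,O_3\}$, so $R$ is an upper bound of the pair $\{O_1,O_2\merge O_3\}$ whose lub is $L$, forcing $L\preceq R$; the reverse is symmetric. Finally, to establish $\preceq \,=\,\leq_{\merge}$: given $O_1\preceq O_2$, $\eqref{eqn:rcomp}$ applied with the reflexive $O_1\sim O_1$ together with the already-proved $\eqref{eqn:i}$ yields $O_1\preceq O_2\merge O_1$, while $\eqref{eqn:lub}$ forces $O_2\merge O_1=O_2$, giving $O_1\leq_{\merge}O_2$; the reverse inclusion is immediate from $\eqref{eqn:lub}$, since $O_1\leq_{\merge}O_2$ means $O_1\merge O_2=O_2$ and thus $O_1\preceq O_1\merge O_2=O_2$.

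For the backward direction, assume $\eqref{eqn:i}$, $\eqref{eqn:c}$, $\eqref{eqn:a}$, $\eqref{eqn:r}$ and $\preceq \,=\,\leq_{\merge}$. Proposition \ref{proposition:assrel}(\ref{remark: assequiv}) upgrades the hypotheses to catenary associativity $\eqref{eqn:cass}$, and Proposition \ref{proposition:naturalposet} then makes $\leq_{\merge}$ a partial order. To verify $\eqref{eqn:lub}$, each $O_i\leq_{\merge}O_1\merge O_2$ follows from $\eqref{eqn:r}$ together with the computation $O_i\merge(O_1\merge O_2)=O_1\merge O_2$ obtained from $\eqref{eqn:cass}$ and $\eqref{eqn:i}$; and if $U$ is any common upper bound, $\eqref{eqn:cass}$ reduces $(O_1\merge O_2)\merge U$ to $O_1\merge(O_2\merge U)=O_1\merge U=U$, with the necessary $\sim$ relations again supplied by $\eqref{eqn:r}$. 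For $\eqref{eqn:comp}$, $\eqref{eqn:rl}$ applied to $O_1\merge O_2=O_2$ gives the $\sim$-conclusion of $\eqref{eqn:lcomp}$, and the inequality $O\merge O_1\leq_{\merge}O\merge O_2$ is obtained by collapsing $(O\merge O_1)\merge(O\merge O_2)$ through repeated use of $\eqref{eqn:c}$, $\eqref{eqn:cass}$ and $O\merge O=O$ down to $O\merge(O_1\merge O_2)=O\merge O_2$; the argument for $\eqref{eqn:rcomp}$ is entirely symmetric.

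The main technical obstacle is bookkeeping the domain of the partial operation $\merge$: every invocation of $\eqref{eqn:cass}$ in the backward direction presupposes that two adjacent merges on one side of the equation exist, and in the derivation of $\eqref{eqn:comp}$ this requires first using $\eqref{eqn:r}$ (often several times in combination with $\eqref{eqn:c}$) to propagate $\sim$ through compound expressions such as $O\merge O_1$ and $O\merge O_2$ before any associativity step can legally be applied. Once the pattern of using $\eqref{eqn:r}$ to supply the domain conditions and $\eqref{eqn:cass}$ to rewrite is systematized, the remaining manipulations are short; the risk is purely in tracking which merges are defined at each stage.
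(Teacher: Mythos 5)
Your proposal is correct and follows essentially the same route as the paper's proof: both directions proceed by directly verifying each property, with $\leq_{\merge}$ serving as the bridge between $\preceq$ and the merge operation. The only notable deviations are improvements in detail --- your associativity argument in the forward direction uses $\eqref{eqn:lub}$ alone (the paper instead routes through the intermediate term $(O_1\merge O_2)\merge(O_2\merge O_3)$ via $\eqref{eqn:lcomp}$), and in the backward direction you explicitly verify the order half of $\eqref{eqn:comp}$, namely $O\merge O_1\preceq O\merge O_2$, a step the paper's proof passes over in silence.
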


%%%%%%%%%%%%%%%%%%%%%%%%%%%%%%%%%%%%%%%%%%%%%%%%%%%%%%%%%%%%
%%%%%%%%%%%%%%%%%%%%%%%%%%%%%%%%%%%%%%%%%%%%%%%%%%%%%%%%%%%
\section{Ontology Merging Closures and Finiteness}\label{section:closures}
%%%%%%%%%%%%%%%%%%%%%%%%%%
An ontology {\em repository or instance} in 
$(\mathfrak{O},\sim,\merge)$ is a finite set $\mathbb{O}\subseteq \mathfrak{O}$. 
It is natural to ask: {\em How many new ontologies can be generated from $\mathbb{O}$ by merging operation 
$\merge$ and how to rank and compare the ontologies?}
In this section, we shall provide our answers to the questions by introducing the merging closure of $\mathbb{O}$.
%%%%%%%%%%%%%%%%%%%%%%%%%%
\begin{definition}
Given a repository $\mathbb{O}\subseteq \mathfrak{O}$, the {\em merging closure} of $\mathbb{O}$,
denoted by $\widehat{\mathbb{O}}$,
is the smallest set $\mathbb{P}\subseteq \mathfrak{O}$ such that
\begin{enumerate}[$1.$]
\item
$\mathbb{O}\subseteq \mathbb{P}$,
\item
$\mathbb{P}$ is closed with respect to merging: for all $O_1, O_2\in \mathbb{P}$ such that $O_1\sim O_2$, 
$O_1\merge O_2\in \mathbb{P}$.
\end{enumerate}
\end{definition}

The existence and uniqueness of the merging closure of a given ontology repository are given by the following theorem.
%%%%%%%%%%%%%%%%%%%%%%%%%%
\begin{theorem}\label{thm:mergeclosure}
Given a repository $\mathbb{O}\subseteq \mathfrak{O}$, 
the merging closure $\widehat{\mathbb{O}}$ exists and it is unique.
\end{theorem}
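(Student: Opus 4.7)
The plan is to use the standard intersection construction for closures under a partial operation. First I would define
\[
\mathcal{F} = \{\mathbb{P} \subseteq \mathfrak{O} \mid \mathbb{O} \subseteq \mathbb{P} \text{ and } \mathbb{P} \text{ is closed under } \merge\}
\]
and then set $\widehat{\mathbb{O}} = \bigcap_{\mathbb{P}\in \mathcal{F}} \mathbb{P}$. The first step is to observe that $\mathcal{F}$ is non-empty: the ambient set $\mathfrak{O}$ itself lies in $\mathcal{F}$, since it contains $\mathbb{O}$ and, by the definition of the partial operation $\merge$ on $\mathfrak{O}$, whenever $O_1, O_2 \in \mathfrak{O}$ with $O_1 \sim O_2$ we have $O_1 \merge O_2 \in \mathfrak{O}$. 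Hence the intersection is well-defined.

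Next I would verify the two defining properties of $\widehat{\mathbb{O}}$. For inclusion, $\mathbb{O} \subseteq \mathbb{P}$ for every $\mathbb{P} \in \mathcal{F}$, so $\mathbb{O}$ is contained in the intersection. For closure, suppose $O_1, O_2 \in \widehat{\mathbb{O}}$ with $O_1 \sim O_2$; then for each $\mathbb{P}\in \mathcal{F}$ we have $O_1, O_2 \in \mathbb{P}$, and since $\mathbb{P}$ is closed under $\merge$, $O_1 \merge O_2 \in \mathbb{P}$; as this holds for every $\mathbb{P}\in \mathcal{F}$, we conclude $O_1 \merge O_2 \in \widehat{\mathbb{O}}$. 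Minimality is automatic from the construction: for any $\mathbb{P}\in \mathcal{F}$, by definition of intersection $\widehat{\mathbb{O}} \subseteq \mathbb{P}$, so $\widehat{\mathbb{O}}$ is the smallest element of $\mathcal{F}$.

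Uniqueness then follows from a routine antisymmetry argument: if $\mathbb{Q}_1$ and $\mathbb{Q}_2$ are both smallest elements of $\mathcal{F}$, then $\mathbb{Q}_1 \subseteq \mathbb{Q}_2$ (since $\mathbb{Q}_2 \in \mathcal{F}$ and $\mathbb{Q}_1$ is smallest) and symmetrically $\mathbb{Q}_2 \subseteq \mathbb{Q}_1$, so $\mathbb{Q}_1 = \mathbb{Q}_2$.

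There is no substantive obstacle here; the only subtle point worth spelling out is that the partiality of $\merge$ does not interfere with the construction, because closure is required only on pairs $(O_1, O_2)$ with $O_1 \sim O_2$, and this condition is preserved when passing to any subfamily. Optionally, I would note that an equivalent \emph{constructive} description can be given by the ascending chain $\mathbb{O}_0 = \mathbb{O}$ and $\mathbb{O}_{n+1} = \mathbb{O}_n \cup \{O_1 \merge O_2 \mid O_1,O_2 \in \mathbb{O}_n,\ O_1 \sim O_2\}$, with $\widehat{\mathbb{O}} = \bigcup_{n\geq 0} \mathbb{O}_n$; this bottom-up description is useful later for algorithmic purposes and for the finiteness arguments leading to Theorem \ref{thm:finiteness}.
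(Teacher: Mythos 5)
Your proof is correct and follows essentially the same route as the paper's: both realize $\widehat{\mathbb{O}}$ as the intersection $\bigcap\{\mathbb{P}\subseteq\mathfrak{O}\mid \mathbb{O}\subseteq\mathbb{P}\text{ and }\mathbb{P}\text{ is closed under merging}\}$, using $\mathfrak{O}$ itself to see that this family is non-empty. Your write-up is in fact slightly cleaner, since you verify directly that the intersection belongs to the family and is its least element, whereas the paper reaches the same intersection after an unnecessary detour through chains and minimal elements.
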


Constructively, $(\mathfrak{O},\merge)$ being considered as a groupoid, $\widehat{\mathbb{O}}$ is
the partial subgroupoid generated by $\mathbb{O}$ with the partial binary operation $\merge$. Hence
$$\widehat{\mathbb{O}}=\bigcup_{i=1}^{+\infty}\mathbb{O}^i,$$
where $\mathbb{O}^i$ is defined inductively by:
\begin{itemize}
\item
If $n=1$, $\mathbb{O}^n=\mathbb{O}$.
\item
If $n>1$,
\[ O\in \mathbb{O}^n \mbox{ if and only if }  \left\{ \begin{array}{ll}
                           \mbox{there is a positive integer } k: 1\leq k< n\\
                           \mbox{there is } O_1\in \mathbb{O}^k \\
                           \mbox{there is }O_2\in \mathbb{O}^{n-k}\\
                           \end{array}\right\}O=O_1\merge O_2 \]
\end{itemize}

Obviously, the product $\mathbb{O}^n$ is the set of all possible values 
$$O_1\merge\cdots (O_i\merge O_{i+1})\cdots \merge O_n,$$ 
where all $O_1\merge \cdots O_i\cdots \merge O_n$ are binarily grouped with the parentheses ``('' and ``)'' being inserted and 
$O_i\in \mathbb{O}, i=1,\cdots,n$.

To compute $\widehat{\mathbb{O}}$ efficientively, we hope that it is finite, which is implied by the properties
$\eqref{eqn:i}$, $\eqref{eqn:c}$, $\eqref{eqn:a}$, and $\eqref{eqn:r}$ due to Corollary A.1 \cite{BGSWW}. 
By Proposition \ref{proposition:naturalposet}, $(\widehat{\mathbb{O}},\leq_{\merge})$ is a poset. Hence we have:
%%%%%%%%%%%%%%%%%%%%%%%%%%%%%%%%%%%%%%%%%%%%%%%%%%%%%%%%%%%
\begin{theorem}\label{thm:finiteness}
For each ontology repository $\mathbb{O}$,
if $\widehat{\mathbb{O}}$ satisfies $\eqref{eqn:i}$, $\eqref{eqn:c}$, $\eqref{eqn:a}$, and $\eqref{eqn:r}$,
then $\widehat{\mathbb{O}}$ is a finite poset with the natural order $\leq_{\merge}$ 
given by $\merge$.
\end{theorem}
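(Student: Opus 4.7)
The plan is to assemble the conclusion from results already established earlier in the paper, rather than argue from scratch. First I would observe that the hypotheses of the theorem give exactly what is needed to trigger two earlier results: Proposition \ref{proposition:assrel}(\ref{remark: assequiv}) promotes $\eqref{eqn:a}$ plus $\eqref{eqn:r}$ to catenary associativity $\eqref{eqn:cass}$ in the presence of $\eqref{eqn:c}$, and Proposition \ref{proposition:naturalposet} then converts $\eqref{eqn:i}$ and $\eqref{eqn:cass}$ into the statement that $\leq_{\merge}$ is a partial order. Since all four properties $\eqref{eqn:i}, \eqref{eqn:c}, \eqref{eqn:a}, \eqref{eqn:r}$ are assumed on $\widehat{\mathbb{O}}$, both ingredients are available on $\widehat{\mathbb{O}}$, and so $(\widehat{\mathbb{O}}, \leq_{\merge})$ is a poset.

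For the finiteness half, I would cite Corollary A.1 of \cite{BGSWW}, exactly as the excerpt advertises. To make the argument self-contained for the reader, I would also sketch the intuition behind that finiteness bound: every element of $\widehat{\mathbb{O}}$ is by the constructive description $\widehat{\mathbb{O}} = \bigcup_{i\geq 1}\mathbb{O}^i$ a fully parenthesized merging expression in generators drawn from the finite set $\mathbb{O}$. Using $\eqref{eqn:a}$ and $\eqref{eqn:r}$ (equivalently $\eqref{eqn:cass}$) to reparenthesize freely, $\eqref{eqn:c}$ to reorder factors, and $\eqref{eqn:i}$ to collapse repeated occurrences of the same generator, every such expression can be normalized to a merge over some non-empty subset $S\subseteq \mathbb{O}$. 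Hence $|\widehat{\mathbb{O}}| \leq 2^{|\mathbb{O}|}-1$, which yields both finiteness and an explicit cardinality bound.

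The main obstacle, in a fully rigorous write-up, is to justify that the normalization procedure above is well defined: one has to ensure that the partiality of $\merge$ does not obstruct the rewriting steps, i.e., that whenever we apply (C), (A), or (R) to rearrange a subexpression the intermediate merges that appear are actually defined. This is precisely the point where representativity $\eqref{eqn:r}$ does the real work, guaranteeing that alignment relations propagate through merges, so that catenary associativity can be applied along an arbitrary sequence of rewrites. Since the delicate version of this argument is already carried out in \cite{BGSWW}, I would simply invoke that Corollary A.1 rather than reproving it, and combine it with the poset conclusion above to finish.
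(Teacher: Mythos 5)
Your proposal is correct and follows essentially the same route as the paper: finiteness is delegated to Corollary~A.1 of \cite{BGSWW}, and the poset structure comes from Proposition~\ref{proposition:naturalposet} after using Proposition~\ref{proposition:assrel}(\ref{remark: assequiv}) to upgrade $\eqref{eqn:c}$, $\eqref{eqn:a}$, $\eqref{eqn:r}$ to $\eqref{eqn:cass}$ --- a step the paper leaves implicit but which you rightly make explicit. Your added sketch of the $2^{|\mathbb{O}|}-1$ normalization bound and the remark on where $\eqref{eqn:r}$ protects against partiality are accurate supplements, not a departure from the paper's argument.
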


%%%%%%%%%%%%%%%%%%%%%%%%%%%%%%%%%%%%%%%%%%%%%%%%%%%%%%%%%%%%%%%%%
\begin{remark}
\begin{enumerate}[$1.$]
\item
The study of finiteness conditions of semigroups consists in giving some conditions 
to assure the finiteness of such semigroups.
In this study, one of the conditions which is generally required is that of being {\em finitely generated}.
A semigroup is {\em periodic} if only finitely many elements can be generated from each element $e$, namely,
$\{e, e^2, \cdots, e^n, \cdots\}$ is finite.
{\em Burnside problem for semigroups} asks if a periodic finitely generated semigroup is finite.

This problem has a negative answer in general \cite{lv, mh}.
However, the answer is positive if the property of commutativity or idempotence is required:
If the finitely generated semigroup is commutative, then its finiteness is trivial.
For the finiteness of a finitely generated idempotent semigroup, see \cite{BL}.

\item
Given an ontology merging system $(\mathfrak{O},\sim,\merge)$ and an ontology repository ${\mathbb O}\subseteq \mathfrak{O}$,
$(\mathfrak{O},\merge)$ is a partial groupoid or  semigroup, depending on the associativity of $\merge$.
The finiteness of $\widehat{\mathbb O}$ requires the {\em Burnside problem for partial groupoids or semigroups}.
\end{enumerate}
\end{remark}

%%%%%%%%%%%%%%%%%%%%%%%%%%%%%%%%%%%%%%%%%%%%%%%%%%%%%%%%%%%%%%%%%%%%%
%%%%%%%%%%%%%%%%%%%%%%%%%%%%%%%%%%%%%%%%%%%%%%%%%%%%%%%%%%%%%%%%%%%%%
\section{Ontology Merging Given by Pushouts}\label{section:pushouts}
%%%%%%%%%%%%%%%%%%%%%%%%%%%%%%%%%%%%%%%%%%%%%%%%%%%%%%%%%%%
Recall that given two sets $X$ and $Y$, the diagram
\begin{equation}\label{diag:setpushout}
\xymatrix{
X\cap Y \ar@{^{(}->}[r]  \ar@{^{(}->}[d] & Y \ar@{^{(}->}[d] \\
X  \ar@{^{(}->}[r] & X\cup Y
}
\end{equation}
with all inclusion functions,
is a pushout in the category ${\bf Set}$ of sets and functions between sets.
If $X\cap Y=\emptyset$, then the pushout square $(\ref{diag:setpushout})$ turns out to be the following pushout:
$$\xymatrix{
\emptyset \ar@{^{(}->}[r]  \ar@{^{(}->}[d] & Y \ar@{^{(}->}[d]\\
X  \ar@{^{(}->}[r] & X\coprod Y
}$$
where $\emptyset$ is the initial object in ${\bf Set}$ and the coproduct $X\coprod Y= X\cup Y$ when $X\cap Y=\emptyset$.

For a bit more general case, let $f:B\rw X$ and $g:B\rw Y$ be two functions.
When $f$ and $g$ are injective, $f(B)$ and $g(B)$ can be viewed as the different labels of $B$ in $X$ and $Y$, respectively.
One can form the disjoint union $X\cupdot Y$ and then identify $f(b)$ with $g(b)$ for $b\in B$ as they are the different labels of $B$ in $X$ and $Y$, respectively. Hence one has the following pushout:
\begin{equation}\label{diag:setpushoutgeneral}
\xymatrix{
B\ar[r]^g \ar[d]_f & Y\ar[d]^{\iota_Y}\\
X\ar[r]^(.4){\iota_X} & X\sqcup_BY\\
}
\end{equation}
where $X\sqcup_BY=X\cupdot Y/\bumpeq$ and $\bumpeq$ is the smallest equivalent relation generated by
$$\{(f(b),g(b))\;|\;b\in B\}.$$

Now let's return to ontology merging operation. Generally, if two ontologies are entirely unrelated, clearly they can be combined by their disjoint union as in $(\ref{diag:setpushout})$.
On the other hand, merging two ontologies which {\em overlap}, up to relabeling,
should lead to a new ontology that identifies the overlapping elements and keeps related elements apart as far as 
it is possible without violating the requirements on the ontology structures \cite{zkeh}. 
This is similar to the case shown in $(\ref{diag:setpushoutgeneral})$ above.
Hence the merge of ontologies $O_1$ and $O_2$ can be obtained by a categorical colimit, pushout, shown in  \cite{ch,hkes,mfh,zkeh}, 
when there exists an ontology $V$-alignment pair between $O_1$ and $O_2$.
Recall the definition of $V$-alignment pairs, in which ontologies $O_1$ and $O_2$ overlap after relabeling $B$ in $O_1$ and $O_2$,
respectively:
%%%%%%%%%%%%%%%%%%%%%%%%%%%%%%%
\begin{definition}\label{def:valign}
\begin{enumerate}[$1.$]
\item
 An ontology  $V$-{\em alignment pair} $(r_1:B\rw O_1,r_2:B\rw O_2)$ is given by a pair of ontology homomorphisms with a common domain: $r_1:B\rw O_1$ and $r_2:B\rw O_2$, displayed as
$$\xymatrix{
O_1 && O_2\\
& B \ar[ul]^{r_1} \ar[ur]_{r_2}
}$$
\item\label{item:valigncop}
Let the merging system $(\mathfrak{O},\approx, \sqcup)$ be given by
\begin{itemize}
\item
$O_1\approx_B O_2$ if and only if there is an ontology $V$-alignment pair $(r_1:B\rw O_1,r_2:B\rw O_2)$;
\item
If $O_1\approx_B O_2$ then $O_1\sqcup_B O_2$ is given by the pushout:
$$\xymatrix{
B \ar[r]^{r_2}\ar[d]_{r_1} & O_2\ar[d]^{i_{r_1}}\\
O_1 \ar[r]^(0.38){i_{r_2}} & O_1\sqcup_B O_2
}$$
\end{itemize}
\end{enumerate}
\end{definition}

Given two ontologies $O_1$ and $O_2$, notice that both definitions of $\approx_B$ and $\sqcup_B$ in Definition $\ref{def:valign}$ above depend on a base ontology $B$. 
The merge of ontologies $O_1$ and $O_2$ given by pushouts depends on the base ontology $B$. 
We assume that the overlaps of $O_1$ and $O_2$ are identified {\em as many as possible} 
when merging $O_1$ and $O_2$ using pushouts.

If there exists an initial object $I$ in the category of the ontologies concerned,
then there is an ontology $V$-alignment pair $(!_{O_1}:I\rw O_1, !_{O_2}:I\rw O_2)$ and $O_1\sqcup_IO_2=O_1\coprod O_2$.

If there is an ontology homomorphism $h:O_1\rw O_2$, then there is an ontology $V$-alignment pair
$(1_{O_1}:O_1\rw O_1,h:O_1\rw O_2)$ and $O_1\sqcup_{O_1}O_2=O_2$ since
$$\xymatrix{
O_1 \ar@{=}[d] \ar[r]^h & O_2 \ar@{=}[d]\\
O_1 \ar[r]^h & O_2\\
}$$
is a pushout. Clearly, there is a unique ontology homomorphism $[h,1]:O_1\sqcup_IO_2\rw O_1\sqcup_{O_1}O_2=O_2$
as 
$$O_1\sqcup_IO_2=O_1\coprod O_2\rw O_2=O_1\sqcup_{O_1}O_2:$$
$$\xymatrix{
& O_2\\
O_1\ar[ur]^h\ar[r]_(.38){\iota_1} & O_1\coprod O_2 \ar@{..>}[u]|(0.4){\exists ![h,1]} & O_2\ar[l]^(0.38){\iota_2}\ar@{=}[ul]\\
}$$

To understand how an ontology
homomorphism $O_1\sqcup_BO_2\rw O_1\sqcup_{B'}O_2$ is induced after the base being changed from $B$ to $B'$, 
let's define a {\em homomorphism} between ontology $V$-alignment pairs.

%%%%%%%%%%%%%%%%%%%%%%%%%%%%%%%%%%%%%%%%%%%%%%%%%%%%%%%%%%%%%%%%%%%%%%%
\begin{definition}\label{defn:valignmenthom}
A {\em homomorphism} from an ontology $V$-alignment pair $(r_1:B\rw O_1, r_2:B\rw O_2)$ to 
an ontology $V$-alignment pair 
$(r_1':B'\rw O_1', r_2':B'\rw O_2')$
is given by a triple of ontology homomorphisms $(f:B\rw B',f_1:O_1\rw O_1', f_2:O_2\rw O_2')$ such that
$$\xymatrix@=1.4em{
&& O_2 \ar[rrrr]^{f_2}&&&& O_2'\\
O_1 \ar|(.4)\hole[rrrr]_(.6){f_1}&&&& O_1'\\
& B\ar[ul]^{r_1}\ar[uur]_(.7){r_2} \ar[rrrr]^{f} &&&& B'\ar[ul]_{r_1'}\ar[uur]_(.7){r_2'}\\
}$$
commutes.
\end{definition}

By the universal property of a pushout, a homomorphism 
$(f:B\rw B',f_1:O_1\rw O_1', f_2:O_2\rw O_2')$ between ontology $V$-alignment pairs gives rise to a unique ontology homomorphism 
$f_*:O_1\sqcup_BO_2\rw O_1'\sqcup_{B'}O_2'$ making
\begin{equation}\label{diag:mergemap}
\xymatrix@=1em{
& O_1\sqcup_BO_2 \ar@{.>}[rrrr]^{f_*} &&&& O_1'\sqcup_{B'}O_2'\\
&& O_2 \ar[rrrr]^(.4){f_2}\ar[ul]_(0.4){\iota_2} &&&& O_2'\ar[ul]_(0.4){\iota_2'} \\
O_1 \ar[uur]^{\iota_1}\ar|(.486)\hole[rrrr]_(.8){f_1}&&&& O_1'\ar|(.5)\hole[uur]^(.7){\iota_1'}\\
& B\ar[ul]^{r_1}\ar[uur]_(.7){r_2} \ar[rrrr]_{f} &&&& B'\ar[ul]_{r_1'}\ar[uur]_(.6){r_2'}\\
}
\end{equation}
commute.  
Hence an ontology homomorphism $f_*:O_1\sqcup _BO_2\rw  O_1'\sqcup_{B'}O_2'$ is induced
when there is  a homomorphism from $(r_1:B\rw O_1, r_2:B\rw O_2)$ to
 $(r_1':B'\rw O_1', r_2':B'\rw O_2')$.
Therefore, we have:

%%%%%%%%%%%%%%%%%%%%%%%%%%%%%%%%%%%%%%%%%%%%%%%%%%%%%%%%%%%%%%%%%
\begin{proposition}
Each homomorphism $(f:B\rw B',f_1:O_1\rw O_1', f_2:O_2\rw O_2')$ between ontology $V$-alignment pairs
gives rise to a unique ontology homomorphism 
$f_*:O_1\sqcup_BO_2\rw O_1'\sqcup_{B'}O_2'$ making the diagram $(\ref{diag:mergemap})$ commute.
\end{proposition}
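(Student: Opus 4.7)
The plan is to invoke the universal property of the pushout defining $O_1\sqcup_BO_2$ after manufacturing a suitable cocone with tip $O_1'\sqcup_{B'}O_2'$. Concretely, starting from the given triple $(f,f_1,f_2)$ and the pushout cocone $(\iota_1',\iota_2')$ of $(r_1',r_2')$, I would set
\[
g_1 \;=\; \iota_1'\circ f_1\,:\,O_1\rw O_1'\sqcup_{B'}O_2',\qquad g_2 \;=\; \iota_2'\circ f_2\,:\,O_2\rw O_1'\sqcup_{B'}O_2'.
\]
The first task is to show that $(g_1,g_2)$ is a cocone under the span $O_1\xleftarrow{r_1}B\xrightarrow{r_2}O_2$, i.e., $g_1\circ r_1=g_2\circ r_2$.

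To verify this, I would chase the two back faces of the prism in Definition \ref{defn:valignmenthom}, which give the identities $f_1\circ r_1=r_1'\circ f$ and $f_2\circ r_2=r_2'\circ f$, followed by the pushout commutativity $\iota_1'\circ r_1'=\iota_2'\circ r_2'$ in $O_1'\sqcup_{B'}O_2'$. Composing:
\[
g_1\circ r_1=\iota_1'\circ f_1\circ r_1=\iota_1'\circ r_1'\circ f=\iota_2'\circ r_2'\circ f=\iota_2'\circ f_2\circ r_2=g_2\circ r_2,
\]
so the cocone condition holds.

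Now the universal property of the pushout $O_1\sqcup_BO_2$ produces a \emph{unique} ontology homomorphism $f_*:O_1\sqcup_BO_2\rw O_1'\sqcup_{B'}O_2'$ with $f_*\circ\iota_1=g_1$ and $f_*\circ\iota_2=g_2$. These two equalities are exactly the top-left and top-right quadrilaterals of diagram $(\ref{diag:mergemap})$. The remaining faces of $(\ref{diag:mergemap})$ are the two back faces (which are the hypotheses on $(f,f_1,f_2)$) and the two pushout squares for $O_1\sqcup_BO_2$ and $O_1'\sqcup_{B'}O_2'$, all of which already commute. Thus every face of $(\ref{diag:mergemap})$ commutes, and uniqueness of $f_*$ is part of the pushout universal property.

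This argument is entirely formal and poses no real obstacle beyond carefully distinguishing the two pairs of cocone inclusions $(\iota_1,\iota_2)$ and $(\iota_1',\iota_2')$; the only ``content'' is recognizing that the prism hypotheses on $(f,f_1,f_2)$ are precisely what is needed to promote $(\iota_1'\circ f_1,\iota_2'\circ f_2)$ to a cocone on the span defining $O_1\sqcup_BO_2$, after which the pushout does all the work.
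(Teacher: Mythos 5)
Your proof is correct and is exactly the argument the paper intends: the paper simply asserts the result ``by the universal property of a pushout,'' and your cocone verification $\iota_1'\circ f_1\circ r_1=\iota_1'\circ r_1'\circ f=\iota_2'\circ r_2'\circ f=\iota_2'\circ f_2\circ r_2$ followed by the universal property of $O_1\sqcup_BO_2$ is the standard filling-in of that assertion.
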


If both $f_1$ and $f_2$ are identity homomorphisms in the commutative diagram $(\ref{diag:mergemap})$,  
then we have the following commutative diagram:
\begin{equation}\label{diag:diag:mergemap11}
\xymatrix{
&& O_1\ar[dr]^{\iota_1}\ar@/^1pc/[drr]^{\iota_1'}\\
B\ar[r]|f \ar@/^1pc/[urr]^{r_1}\ar@/_1pc/[drr]_{r_2} & B' \ar[ur]^{r_1'}\ar[dr]_{r_2'}
        && \hspace{-8mm}O_1\sqcup_B O_2 \ar@{.>}[r]|(.36){f_*}
                    & O_1\sqcup_{B'}O_2\\
&& O_2\ar[ur]_{\iota_2}\ar@/_1pc/[urr]_{\iota_2'}\\
}\end{equation}

If $f$ is epic, namely, $xf=yf$ implies $x=y$, then the induced homomorphism $f_*$ is an isomorphism:

%%%%%%%%%%%%%%%%%%%%%%%%%%%%%%%%%%%%%%%%%%%%%%%%%%%%%%%%%%%%
\begin{proposition}\label{prop:epiciso}
If $f$ is an epic ontology homomorphism such that
$$\xymatrix{
&& O_1\\
B\ar[urr]^{r_1}\ar[drr]_{r_2}\ar[r]|f & B'\ar[dr]^{r_2'}\ar[ur]_{r_1'}\\
&& O_2\\
}$$
commutes in the category of the ontologies concerned, then $O_1\sqcup_BO_2\cong O_1\sqcup_{B'}O_2$.
\end{proposition}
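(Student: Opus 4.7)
The plan is to exploit the universal property of pushouts together with the hypothesis that $f$ is epic to show that $O_1\sqcup_{B'}O_2$, equipped with its canonical injections $\iota_1'$ and $\iota_2'$, actually serves as a pushout of the original $B$-based span $(r_1,r_2)$; uniqueness of pushouts then yields the isomorphism.

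First I would observe that the hypothesis $r_1=r_1'\circ f$ and $r_2=r_2'\circ f$ is precisely the data of a homomorphism $(f,1_{O_1},1_{O_2})$ of ontology $V$-alignment pairs from $(r_1,r_2)$ to $(r_1',r_2')$. Applying the preceding proposition (the universal property of pushouts used to produce $f_*$) already gives a canonical ontology homomorphism $f_*:O_1\sqcup_BO_2\rightarrow O_1\sqcup_{B'}O_2$ with $f_*\circ\iota_1=\iota_1'$ and $f_*\circ\iota_2=\iota_2'$. It remains to show $f_*$ is an isomorphism.

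Next I would verify that $(\iota_1',\iota_2')$ is a cocone under the $B$-based span $(r_1,r_2)$: indeed $\iota_1'\circ r_1=\iota_1'\circ r_1'\circ f=\iota_2'\circ r_2'\circ f=\iota_2'\circ r_2$, using the pushout equation for $O_1\sqcup_{B'}O_2$. Now take any competing cocone $(\phi_1:O_1\rightarrow X,\phi_2:O_2\rightarrow X)$ with $\phi_1\circ r_1=\phi_2\circ r_2$. Substituting $r_i=r_i'\circ f$ gives $\phi_1\circ r_1'\circ f=\phi_2\circ r_2'\circ f$. Here is the one place the hypothesis is used: since $f$ is epic, we may cancel it on the right to obtain $\phi_1\circ r_1'=\phi_2\circ r_2'$. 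Thus $(\phi_1,\phi_2)$ is also a cocone for the $B'$-based span, and the universal property of $O_1\sqcup_{B'}O_2$ produces a unique $h:O_1\sqcup_{B'}O_2\rightarrow X$ with $h\circ\iota_1'=\phi_1$ and $h\circ\iota_2'=\phi_2$. Uniqueness of $h$ against the $B$-based cocone $(\phi_1,\phi_2)$ follows from the uniqueness against the $B'$-based one, since any two such mediators agree after precomposition with $\iota_1',\iota_2'$, which are jointly epic out of the pushout.

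Hence $(O_1\sqcup_{B'}O_2,\iota_1',\iota_2')$ satisfies the universal property of the pushout of $(r_1,r_2)$. Because pushouts are unique up to a unique isomorphism compatible with the injections, there is a unique isomorphism $O_1\sqcup_BO_2\cong O_1\sqcup_{B'}O_2$, and by construction this isomorphism must coincide with $f_*$. The main conceptual step, and the only nontrivial one, is the epic cancellation that lets us transfer a cocone over the $B$-based span to a cocone over the $B'$-based span; everything else is a direct invocation of universal properties.
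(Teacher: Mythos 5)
Your proof is correct and follows essentially the same route as the paper's: both hinge on cancelling the epic $f$ to transfer cocones between the $B$-span and the $B'$-span and then invoke the universal property of pushouts. The only difference is the direction --- you show that $O_1\sqcup_{B'}O_2$ satisfies the universal property of the pushout over $B$, whereas the paper shows (via its Lemma \ref{lemma:pushoutbyepic}) that $O_1\sqcup_BO_2$ is a pushout over $B'$ and then explicitly constructs and composes the two comparison maps --- but this is just a mirror image of the same argument.
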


The following proposition shows that $\leq_{\sqcup}$ can be characterized by the existence of an ontology homomorphism.
%%%%%%%%%%%%%%%%%%%%%%%%%%%%%%%%%%%%%%%%%%%%%%%%%%%%%%%%%%%%%%%%%%%%%%%%%%
\begin{proposition}\label{prop:<=hom}
In the ontology merging system $(\mathfrak{O}, \approx, \sqcup)$,
there is an ontology $B$ such that $O_1\leq_{\sqcup_B} O_2$ if and only if
there is an ontology homomorphism $h:O_1\rw O_2$.
\end{proposition}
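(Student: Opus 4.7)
The plan is to prove the two implications separately, both of which hinge on the canonical pushout identity recorded just below Definition \ref{def:valign}: for any ontology homomorphism $h:O_1\to O_2$, the square with top and bottom arrows both $h$ and both verticals the identity is a pushout, so $O_1\sqcup_{O_1}O_2=O_2$ with coprojections $i_{1_{O_1}}=1_{O_2}$ and $i_h=h$. For the ``if'' direction, suppose $h:O_1\to O_2$ is an ontology homomorphism; I would take $B:=O_1$. Then $(1_{O_1},h)$ is a $V$-alignment pair, so $O_1\approx_{O_1}O_2$, and the identity above gives $O_1\sqcup_{O_1}O_2=O_2$. Swapping the two legs of the cospan yields the pushout of $(h,1_{O_1})$, which by the same argument produces both $O_2\approx_{O_1}O_1$ and $O_2\sqcup_{O_1}O_1=O_2$. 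Together these give $O_1\leq_{\sqcup_{O_1}}O_2$, as required.

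For the ``only if'' direction, suppose $B$ is an ontology with $O_1\leq_{\sqcup_B}O_2$. Unfolding the definition of $\leq_{\sqcup_B}$ gives, first, a $V$-alignment pair $(r_1:B\to O_1,\,r_2:B\to O_2)$ witnessing $O_1\approx_B O_2$, and second, the equality $O_1\sqcup_B O_2=O_2$. The defining pushout square then has vertex $O_2$, and the coprojection $i_{r_2}:O_1\to O_1\sqcup_B O_2=O_2$ is an ontology homomorphism by the very construction of the pushout in the category of the ontologies concerned. Setting $h:=i_{r_2}$ produces the desired morphism $O_1\to O_2$.

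The only real subtlety --- and the main point I would take care to spell out --- is the precise meaning of the equality $O_1\sqcup_B O_2=O_2$, since categorical pushouts are determined only up to canonical isomorphism. Following the convention established in the worked example right after Definition \ref{def:valign}, I interpret such an equality as asserting that a pushout can be chosen with vertex literally $O_2$ and $i_{r_1}=1_{O_2}$; then $i_{r_2}$ is, on the nose, a morphism $O_1\to O_2$. Once this convention is fixed, both directions of the proposition amount to reading off the two legs of the same pushout computation, and no further work is needed.
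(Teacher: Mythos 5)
Your proof is correct and follows essentially the same route as the paper's: for ``if'' take $B=O_1$ with the alignment pair $(1_{O_1},h)$ and use that the square with identity verticals is a pushout to get $O_1\sqcup_{O_1}O_2=O_2\sqcup_{O_1}O_1=O_2$; for ``only if'' read off the coprojection $i_{r_2}:O_1\rw O_1\sqcup_BO_2=O_2$ from the defining pushout. Your explicit remark about interpreting $O_1\sqcup_BO_2=O_2$ on the nose (rather than up to isomorphism) is a point the paper leaves implicit, but it does not change the argument.
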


By Lemmas \ref{lemma:i}, \ref{lemma:c}, \ref{lemma:ass}, \ref{lemma:r}, and \ref{prop:parsemigroup}, and Theorem \ref{thm:partialordersequal},
clearly
$(\mathfrak{O},\approx, \sqcup)$ is actually an ontology merging system satisfying all properties introduced 
Sections \ref{section:properties} and \ref{section:naturalorder} 
and $(\mathfrak{O},\sqcup)$ is a partial idempotent commutative semigroup. 
%%%%%%%%%%%%%%%%%%%%%%%%%%%%%%%
\begin{theorem}\label{theorem:pushoutsall}
The merging system $(\mathfrak{O},\approx, \sqcup)$ satisfies $\eqref{eqn:lub}$, $\eqref{eqn:comp}$, $\eqref{eqn:i}$, $\eqref{eqn:c}$,
$\eqref{eqn:a}$, $\eqref{eqn:cass}$, $\eqref{eqn:sass}$, and $\eqref{eqn:r}$
and $(\mathfrak{O},\sqcup)$ is a partial idempotent commutative semigroup.
\end{theorem}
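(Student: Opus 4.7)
The plan is to verify each listed property in turn, relying on the universal property of pushouts and on the named lemmas (\ref{lemma:i}, \ref{lemma:c}, \ref{lemma:ass}, \ref{lemma:r}, \ref{prop:parsemigroup}) together with Proposition \ref{proposition:assrel} and Theorem \ref{thm:partialordersequal}. Most of the properties are stable under the expected pushout manipulations; the only real work is the associativity cluster, since the base ontology $B$ appears as an extra parameter in $\approx_B$ and $\sqcup_B$.

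For \eqref{eqn:i}, I would exhibit the $V$-alignment pair $(1_O:O\to O,\,1_O:O\to O)$: its pushout is (canonically isomorphic to) $O$ itself, giving both $O\approx O$ and $O\sqcup O = O$. For \eqref{eqn:c}, the definition of a $V$-alignment pair and of a pushout is manifestly symmetric in the two legs, so interchanging $r_1$ and $r_2$ produces a pushout square with the same vertex, yielding $O_1\sqcup_B O_2 \cong O_2\sqcup_B O_1$. For \eqref{eqn:r}, suppose $O_3 = O_1\sqcup_B O_2$ with coprojection $\iota_1:O_1\to O_3$. Given any $V$-alignment pair $(s:B'\to O,\,r:B'\to O_1)$ witnessing $O\approx O_1$, the pair $(s:B'\to O,\,\iota_1\circ r:B'\to O_3)$ is again a $V$-alignment pair, so $O\approx O_3$, establishing $\eqref{eqn:rl}$; $\eqref{eqn:rr}$ is symmetric by \eqref{eqn:c}. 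These verifications are exactly the content of Lemmas \ref{lemma:i}, \ref{lemma:c}, and \ref{lemma:r}.

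The main obstacle is the associativity tower \eqref{eqn:a}, \eqref{eqn:cass}, \eqref{eqn:sass}, supplied by Lemma \ref{lemma:ass} and Proposition \ref{prop:parsemigroup}. My approach is the pushout pasting lemma: if both $(O_1\sqcup_{B_{12}}O_2)\sqcup_{B_{3}}O_3$ and $O_1\sqcup_{B_{1}}(O_2\sqcup_{B_{23}}O_3)$ exist, I would build a combined diagram in which the two iterated pushouts are realized as outer rectangles of two stackings of pushout squares sharing a common middle edge, and then invoke the uniqueness part of the universal property to exhibit a canonical isomorphism between the two apexes, which is \eqref{eqn:a}. For \eqref{eqn:sass}, the extra content is that existence of one bracketing forces existence of the other: given an inner merge $O_2\sqcup O_3$, the coprojections $O_2,O_3\to O_2\sqcup O_3$ can be composed with any $V$-alignment leg into $O_1$ to fabricate the bases required by the opposite bracketing. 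Once (SA) holds, Proposition \ref{proposition:assrel}(\ref{remark: assextension}) gives the null extension semigroup statement, and (A) follows from (SA) by Proposition \ref{proposition:assrel}(\ref{remark: assimplya}); then (CA) follows from (A), \eqref{eqn:c}, and \eqref{eqn:r} by Proposition \ref{proposition:assrel}(\ref{remark: assequiv}). This also yields the partial idempotent commutative semigroup structure on $(\mathfrak{O},\sqcup)$.

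With \eqref{eqn:i}, \eqref{eqn:c}, \eqref{eqn:a}, and \eqref{eqn:r} in hand, Theorem \ref{thm:partialordersequal} applies (note that $\approx$ is reflexive and symmetric by the first two properties), giving $\preceq\,=\,\leq_{\sqcup}$ and establishing both \eqref{eqn:lub} and \eqref{eqn:comp} simultaneously. Proposition \ref{prop:<=hom} offers a sanity check: $O_1\leq_{\sqcup} O_2$ iff there is an ontology homomorphism $O_1\to O_2$, and both \eqref{eqn:lub} and \eqref{eqn:comp} then read off directly from functoriality of the coprojections in diagram \eqref{diag:mergemap}. The whole argument therefore reduces to checking the associativity lemma carefully; the rest is a chain of formal consequences via Proposition \ref{proposition:assrel} and Theorem \ref{thm:partialordersequal}.
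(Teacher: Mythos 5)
Your proposal follows essentially the same route as the paper: verify $\eqref{eqn:i}$, $\eqref{eqn:c}$, $\eqref{eqn:sass}$, and $\eqref{eqn:r}$ directly from the universal property of pushouts (these are the paper's Lemmas \ref{lemma:i}, \ref{lemma:c}, \ref{lemma:ass}, and \ref{lemma:r}), derive $\eqref{eqn:a}$ and $\eqref{eqn:cass}$ via Proposition \ref{proposition:assrel}, and obtain $\eqref{eqn:lub}$ and $\eqref{eqn:comp}$ from the ``if'' direction of Theorem \ref{thm:partialordersequal} with $\preceq\;=\;\leq_{\sqcup}$. The only cosmetic difference is your $\eqref{eqn:r}$ argument, which composes the alignment leg with the pushout coprojection $\iota_1$ directly instead of passing through the paper's pullback; both constructions produce the required $V$-alignment pair.
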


\begin{remark}
Given an ontology merging system $(\mathfrak{O}, \approx, \sqcup)$ and an ontology repository
$\mathbb{O} \subseteq \mathfrak{O}$, the finiteness of $\widehat{\mathbb O}$ can also be obtained by $\eqref{eqn:sass}$.

By Lemma \ref{lemma:ass}, $(\mathfrak{O}, \approx, \sqcup)$ satisfies $\eqref{eqn:sass}$
and so,  by Proposition $\ref{proposition:assrel}(\ref{remark: assextension})$,
the null extension $(\mathfrak{O}^{\uparrow}, \sqcup^{\uparrow})$ is a total semigroup and 
$$\widehat{\mathbb{O}}\cup \{\uparrow\}=\bigcup_{i=1}^{+\infty}\big(\mathbb{O}\cup \{\uparrow\}\big)^i,$$
that is, $\widehat{\mathbb{O}}\cup \{\uparrow\}$ is the idempotent finitely generated semigroup 
given by $\mathbb{O}\cup\{\uparrow\}$.
Hence the finiteness of $\widehat{\mathbb{O}}$ in $(\mathfrak{O}, \approx, \sqcup)$ is replied by the finitess of an idempotent 
finitely generated semigroup $\langle \mathbb{O}\cup \{\uparrow\}\rangle$, which is finite as shown in \cite{BL}.
\end{remark}

%%%%%%%%%%%%%%%%%%%%%%%%%%%%%%%%%%%%%%%%%%%%%%%%%%%%%%%%%%%%%%%%%%%%%%%%
%%%%%%%%%%%%%%%%%%%%%%%%%%%%%%%%%%%%%%%%%%%%%%%%%%%%%%%%%%%
\section{Conclusions}\label{section:conclusions}
We introduced ontology merging systems and studied the ontology aligning and merging operations by the properties the operations shared in an ontology merging system, such as,
idempotence $\eqref{eqn:i}$, commutativity $\eqref{eqn:c}$, associativity $\eqref{eqn:a}$, and representativity $\eqref{eqn:r}$
without any specific setting and internal ontology details being considered.

Following the natural partial order approaches on semigroups,
we introduced the natural partial order given by the merging operation to the ontologies concerned
so that the ontologies, in the ontology merging system satisfying $\eqref{eqn:i}$ and $\eqref{eqn:cass}$,
form a poset. Hence the ontologies can be compared, sorted, and selected by sorting and selection algorithms 
in posets and for the transitive relations.

An ontology repository $\mathbb{O}$ is a finite set of the ontologies concerned.
We viewed the repository $\mathbb{O}$ as generators in a partial algebraic environment and 
treated the merging closure of $\mathbb{O}$, the smallest set of ontologies, 
which is closed with respect to merging, as the subalgebraic structure
generated by $\mathbb{O}$.
We proved that the merging closure of a given repository is partially ordered 
and finite under some reasonable conditions: $\eqref{eqn:i}$, $\eqref{eqn:c}$, $\eqref{eqn:a}$, and $\eqref{eqn:r}$, so that we can compute and compare these ontologies generated from the repository by merging.

We also showed that the ontology merging system, given by ontology $V$-alignment pairs and pushouts,
satisfies all properties introduced in Sections \ref{section:properties} and \ref{section:naturalorder} 
and so there are efficient algorithms to 
compute merging closures and query the specific ontologies generated by pushouts.

%%%%%%%%%%%%%%%%%%%%%%%%%%%%%%%%%%%%%%%%%%%%%%%%%%%%%%%%%%%%%%%%%%%%%%%%%%
%%%%%%%%%%%%%%%%%%%%%%%%%%%%%%%%%%%%%%%%%%%%%%%%%%%%%%%%%%%%%%%%%%%%%%%%%%

\begin{appendices}
The minimum requirements of the categorical notions and ontology concepts for the paper include: 
category, homomorphism, isomorphism, coproduct, pullback, pushout, monic, epic, injection,  initial object,
ontology, ontology homomorphism.
For the notions and a systematic introduction to category theory, the reader may consult, for instance, 
\cite{ahs, ml, hkes, zkeh}.

%%%%%%%%%%%%%%%%%%%%%%%%%%%%%%%%%%%%%%%%%%%%%%%%%%%%%%%%%%%%%%%%%%
\section{Lemmas}\label{append:lemmas}
Lemmas \ref{lemma:pushoutbyepic} and \ref{pushout:ont} will be shown in a general category.
%%%%%%%%%
\begin{lemma}\label{lemma:pushoutbyepic}
Assume that the diagram
$$\xymatrix{
O_1 \ar[rr] \ar[dd] \ar[dr]^f && O_2 \ar[dd]\\
& O \ar[ur] \ar[dl]&\\
O_3 \ar[rr]&& O_4\\
}$$
commutes.
\begin{enumerate}[$1.$]
\item
If outer square $O_1,O_2,O_3,O_4$ is a pushout, then so is $O,O_2,O_3,O_4$.
\item
If $O,O_2,O_3,O_4$ is a pushout and $f$ is epic, then the outer square $O_1,O_2,O_3,O_4$ is a pushout.
\end{enumerate}
\end{lemma}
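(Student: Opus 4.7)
The plan is to unpack the universal property of pushouts twice. Both parts are diagram chases; the asymmetry is that Part 2 needs the hypothesis that $f$ is epic, while Part 1 does not.

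For Part 1, I would assume the outer square $O_1,O_2,O_3,O_4$ is a pushout and verify the universal property for the inner square $O,O_2,O_3,O_4$. Take an arbitrary competing cocone, namely an object $X$ with morphisms $g_2: O_2 \to X$ and $g_3: O_3 \to X$ such that the composites $O \to O_2 \to X$ and $O \to O_3 \to X$ agree. Precomposing both sides with $f: O_1 \to O$, and using commutativity of the given hexagonal diagram, turns $(g_2, g_3)$ into a cocone on the outer span $O_2 \leftarrow O_1 \to O_3$. The universal property of the outer pushout then yields a unique $h: O_4 \to X$ whose composites with $O_2 \to O_4$ and $O_3 \to O_4$ are $g_2$ and $g_3$ respectively, which is exactly the factorization required by the inner square. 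Uniqueness of $h$ against the inner cocone follows because any other such mediator would also mediate the induced outer cocone, hence would be forced to agree with $h$.

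For Part 2, I would assume the inner square is a pushout and that $f$ is epic. Let $(g_2: O_2 \to X, g_3: O_3 \to X)$ be a cocone on the outer span, so the composites $O_1 \to O_2 \to X$ and $O_1 \to O_3 \to X$ coincide. Using commutativity, these composites equal $g_2 \circ (O \to O_2) \circ f$ and $g_3 \circ (O \to O_3) \circ f$. Since $f$ is epic, I can cancel it on the right to obtain $g_2 \circ (O \to O_2) = g_3 \circ (O \to O_3)$, which is exactly an inner cocone. The inner pushout supplies the unique mediating $h: O_4 \to X$, and uniqueness against the outer cocone reduces once again to uniqueness against the derived inner cocone, since any outer mediator is automatically an inner mediator.

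I do not expect a serious obstacle: both arguments are routine invocations of the universal property. The one subtle point worth highlighting is the role of the epic assumption in Part 2, which is precisely what allows an outer compatibility condition to descend to an inner one. Without it, a cocone on the outer span might fail to factor through $O$, so the implication would break; this is exactly why Part 1 is cleanly one-directional (precomposition never fails to produce a cocone) whereas Part 2 must go the other way and needs right cancellation.
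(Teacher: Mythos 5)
Your argument is correct: both parts are the standard universal-property chase, with precomposition by $f$ turning inner cocones into outer ones for Part 1, and right-cancellation of the epic $f$ turning outer compatibility into inner compatibility for Part 2, the mediating morphism and its uniqueness transferring verbatim in each case since the factorization conditions through $O_4$ are literally the same equations. The paper dismisses this lemma with ``Both are clear,'' so your write-up simply supplies the routine details the authors had in mind; there is no gap.
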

\begin{proof}
Both are clear.
\end{proof}
%%%%%%%%%
\begin{lemma}\label{pushout:ont}
$e:B\rw O$ is an epic if and only if
$$\xymatrix{
B\ar[r]^e\ar[d]_e & O\ar[d]^{1_O}\\
O\ar[r]^{1_O} & O
}$$
is a pushout.
\end{lemma}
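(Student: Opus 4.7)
The plan is to unpack both directions directly from the definitions, using the fact that the two legs of the cospan are both $1_O$. The key observation is that any mediating morphism $u : O \to X$ out of the pushout satisfies $u = u \circ 1_O$, so the commutativity conditions $u \circ 1_O = x_1$ and $u \circ 1_O = x_2$ force $u = x_1 = x_2$; this collapses the universal property into exactly the cancellation property of an epic.

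For the forward direction, I would assume $e$ is epic and verify the universal property of the pushout. Given any object $X$ with morphisms $x_1, x_2 : O \to X$ satisfying $x_1 \circ e = x_2 \circ e$ (the compatibility condition for the cospan), epicness of $e$ yields $x_1 = x_2$. Then defining $u := x_1 = x_2$ gives a morphism with $u \circ 1_O = x_1$ and $u \circ 1_O = x_2$, and uniqueness is automatic since any such $u$ must equal $x_1$.

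For the converse, assume the square is a pushout and let $x_1, x_2 : O \to X$ satisfy $x_1 \circ e = x_2 \circ e$. This produces a cocone under the span $O \xleftarrow{e} B \xrightarrow{e} O$, so the universal property of the pushout yields a unique $u : O \to X$ with $u \circ 1_O = x_1$ and $u \circ 1_O = x_2$. The two equations force $x_1 = u = x_2$, establishing that $e$ is epic.

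There is essentially no obstacle here; both directions are immediate from unpacking the definitions. The only care required is keeping track of the direction of the universal property (we are mapping out of the pushout apex $O$ to an arbitrary test object $X$) and noting that the commutativity requirements $u \circ 1_O = x_i$ reduce to $u = x_i$. No appeal to earlier results in the paper is needed.
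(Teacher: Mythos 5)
Your proof is correct and is exactly the routine verification the paper omits (its proof of this lemma is just ``It is routine to verify both directions''). Both directions reduce, as you observe, to the fact that the mediating morphism out of the apex must equal each of $x_1$ and $x_2$ because the cospan legs are identities, so the universal property collapses to right-cancellability of $e$.
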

\begin{proof}
It is routine to verify both directions.
\end{proof}

Lemmas \ref{lemma:i}, \ref{lemma:c}, and \ref{lemma:ass} below are proved on the ontology merging system
$(\mathfrak{O}, \approx, \sqcup)$.
%%%%%%%%%%%%%%%%%%%%%%%%%%%%%%%
\begin{lemma}\label{lemma:i}
$(\mathfrak{O}, \approx, \sqcup)$ satisfies $\eqref{eqn:i}$.
\end{lemma}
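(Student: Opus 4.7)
The plan is to unpack idempotence in the merging system $(\mathfrak{O},\approx,\sqcup)$ and exhibit, for each $O\in\mathfrak{O}$, an explicit ontology $V$-alignment pair together with a pushout computation showing the merge returns $O$ itself. Since $\approx_B$ and $\sqcup_B$ both depend on a chosen base ontology $B$, the natural move is to choose $B=O$ and both legs equal to the identity ontology homomorphism.

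First, I would verify the alignment part of $\eqref{eqn:i}$: take $B:=O$ and $r_1:=1_O$, $r_2:=1_O$. Since $1_O:O\rw O$ is a legitimate ontology homomorphism, the pair $(1_O:O\rw O,\; 1_O:O\rw O)$ is an ontology $V$-alignment pair by Definition \ref{def:valign}, so $O\approx_O O$. This establishes that $O\sim O$ in the sense of $\eqref{eqn:i}$ (for the base $B=O$), which is what the idempotence clause requires in this setting.

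Next, I would compute $O\sqcup_O O$ via the pushout of $1_O$ along $1_O$. The key observation is that $1_O$ is trivially epic: if $x\circ 1_O = y\circ 1_O$ then $x=y$. Applying Lemma \ref{pushout:ont} with $e:=1_O$, the square
$$\xymatrix{
O\ar[r]^{1_O}\ar[d]_{1_O} & O\ar[d]^{1_O}\\
O\ar[r]^{1_O} & O
}$$
is a pushout. By uniqueness of pushouts up to isomorphism, the apex object $O\sqcup_O O$ may be identified with $O$, so $O\sqcup_O O = O$, which is exactly the equation $O\merge O = O$ required by $\eqref{eqn:i}$.

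There is no real obstacle here; the only subtlety worth flagging is that in $(\mathfrak{O},\approx,\sqcup)$ the aligning relation is parameterized by a base ontology, so idempotence must be read as the existence of some base witnessing $O\approx_B O$ and yielding $O\sqcup_B O = O$, rather than as a single unparameterized relation. Choosing $B=O$ with identity legs makes both clauses immediate, and Lemma \ref{pushout:ont} is precisely the tool that converts the epicness of $1_O$ into the pushout identity we need.
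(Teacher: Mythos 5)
Your proposal is correct and follows essentially the same route as the paper: both take the $V$-alignment pair $(1_O,1_O)$ with base $B=O$ and invoke Lemma \ref{pushout:ont} on the epic $1_O$ to identify the pushout apex with $O$. Your additional remarks about explicitly witnessing $O\approx_O O$ and about the base-parameterized reading of idempotence are fine elaborations of the same argument.
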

\begin{proof}
Since $1_O:O\rw O$ is epic,
by Lemma \ref{pushout:ont}
$$\xymatrix{
O\ar[r]^{1_O}\ar[d]_{1_O} & O\ar[d]^{1_O}\\
O\ar[r]^{1_O} & O
}$$
is a pushout.
Hence
$O\sqcup_OO=O$ and therefore $\eqref{eqn:i}$ holds. 
\end{proof}

%%%%%%%%%%%%%%%%%%%%%%%%%%%%%%%
\begin{lemma}\label{lemma:c}
$(\mathfrak{O}, \approx, \sqcup)$ satisfies \eqref{eqn:c}.
\end{lemma}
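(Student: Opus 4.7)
The plan is to verify the two parts of \eqref{eqn:c} in turn, exploiting the symmetric shape of a $V$-alignment pair and of a pushout square. For the ``iff'' part, suppose $O_1 \approx_B O_2$, so that there is a $V$-alignment pair $(r_1 : B \rw O_1,\; r_2 : B \rw O_2)$. Simply swapping the two legs yields the pair $(r_2 : B \rw O_2,\; r_1 : B \rw O_1)$, which witnesses $O_2 \approx_B O_1$. The converse is identical, so $\approx_B$ is symmetric.

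For the equational part, assume $O_1 \approx_B O_2$ with $V$-alignment pair $(r_1, r_2)$, and consider the defining pushout squares
$$
\xymatrix{
B \ar[r]^{r_2}\ar[d]_{r_1} & O_2\ar[d]^{\iota_1}\\
O_1 \ar[r]^(.38){\iota_2} & O_1\sqcup_B O_2
}
\qquad\text{and}\qquad
\xymatrix{
B \ar[r]^{r_1}\ar[d]_{r_2} & O_1\ar[d]^{\iota_1'}\\
O_2 \ar[r]^(.38){\iota_2'} & O_2\sqcup_B O_1
}
$$
Interchanging the roles of $r_1$ and $r_2$ (equivalently, reflecting the first square across its diagonal) turns the first square into a cocone over the cospan defining $O_2 \sqcup_B O_1$, and vice versa. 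By the universal property of each pushout, one obtains a pair of ontology homomorphisms between $O_1 \sqcup_B O_2$ and $O_2 \sqcup_B O_1$ whose composites both satisfy the defining property of the identity on a pushout; uniqueness then forces these composites to be the respective identities. Hence the two pushout objects coincide (under our convention of identifying canonically isomorphic pushouts), giving $O_1 \sqcup_B O_2 = O_2 \sqcup_B O_1$.

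The only subtlety is a bookkeeping one: a pushout is only determined up to canonical isomorphism, so the equality sign in \eqref{eqn:c} must be read modulo the standard convention (already implicit in Definition~\ref{def:valign} and in Lemma~\ref{lemma:i}) that chosen pushout representatives are identified with everything canonically isomorphic to them. Once this is acknowledged, the argument is essentially a one-line invocation of the symmetry of the defining pushout diagram, so I do not anticipate any real obstacle.
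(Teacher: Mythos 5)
Your proof is correct and follows essentially the same route as the paper's: swap the legs of the $V$-alignment pair for symmetry of $\approx_B$, then use the universal property of the two pushout squares to produce mutually inverse homomorphisms, concluding $O_1\sqcup_B O_2\cong O_2\sqcup_B O_1$. Your remark about reading the equality up to canonical isomorphism matches the paper, whose own proof likewise ends with an isomorphism rather than a literal equality.
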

\begin{proof}
Given an ontology $V$-alignment pair:
$$\xymatrix{
O_1 && O_2\\
& B \ar[ul]^{r_1} \ar[ur]_{r_2}
}$$
we have the following ontology $V$-alignment pair:
$$\xymatrix{
O_2 && O_1\\
& B \ar[ul]^{r_2} \ar[ur]_{r_1}
}$$
and the merges $O_1\sqcup_BO_2$ and $O_2\sqcup_BO_1$ are given by
the following pushouts:
$$\xymatrix{
B \ar[r]^{r_2}\ar[d]_{r_1} & O_2\ar[d]^{i_{r_1}}\\
O_1 \ar[r]^(0.36){i_{r_2}} & O_1\sqcup_B O_2
}$$
and
$$\xymatrix{
B \ar[r]^{r_1}\ar[d]_{r_2} & O_1\ar[d]^{j_{r_2}}\\
O_2\ar[r]^(0.36){j_{r_1}} & O_2\sqcup_B O_1
}$$
respectively.
Hence there exist unique morphisms $j_1:O_1\sqcup_B O_2\rw O_2\sqcup_B O_1$
and $j_2:O_2\sqcup_BO_1\rw O_1\sqcup_BO_2$ making the following
$$\xymatrix{
B \ar[r]^{r_2}\ar[d]_{r_1} & O_2\ar[d]^{i_{r_1}} \ar@/^1pc/[ddr]^{j_{r_1}}\\
O_1\ar@/_1pc/[drr]_{j_{r_2}} \ar[r]^(0.36){i_{r_2}} & O_1\sqcup_B O_2 \ar@{.>}@<0.4ex>[dr]^{j_1} \\
&& O_2\sqcup_B O_1\ar@{.>}@<0.4ex>[ul]^{j_2}
}$$
commutes serially.
Hence
$$(j_2j_1)i_{r_1}=j_2j_{r_1}=i_{r_1}$$
and
$$(j_2j_1)i_{r_2}=j_2j_{r_2}=i_{r_2}$$
and so $j_2j_1=1_{O_1\sqcup_BO_2}$.

Similarly, we have $j_1j_2=1_{O_2\sqcup_B O_1}$.
Thus, $O_1\sqcup_B O_2\cong O_2\sqcup_B O_1$, as desired
\end{proof}

%%%%%%%%%%%%%%%%%%%%%%%%%%%%%%%
\begin{lemma}\label{lemma:ass}
$(\mathfrak{O}, \approx, \sqcup)$ satisfies $\eqref{eqn:sass}$.
\end{lemma}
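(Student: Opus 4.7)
My plan is to prove strong associativity by reducing both iterated merges to the colimit of a common three-object zigzag diagram, using the classical pushout pasting lemma. Concretely, given $O_1, O_2, O_3 \in \mathfrak{O}$, assume (for the ``defined'' direction) that there exist $V$-alignment pairs $(r_1: B_{12}\rw O_1, r_2: B_{12}\rw O_2)$ and $(r_2': B_{23}\rw O_2, r_3: B_{23}\rw O_3)$ realizing the relations needed for both sides of the equation. Form the pushouts $P_{12}=O_1\sqcup_{B_{12}}O_2$ with injections $\iota_1^{12}, \iota_2^{12}$, and $P_{23}=O_2\sqcup_{B_{23}}O_3$ with injections $\iota_2^{23}, \iota_3^{23}$. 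The composite maps $\iota_2^{12}\circ r_2': B_{23}\rw P_{12}$ (paired with $r_3: B_{23}\rw O_3$) and $\iota_2^{23}\circ r_2: B_{12}\rw P_{23}$ (paired with $r_1: B_{12}\rw O_1$) furnish the $V$-alignment pairs needed to construct the iterated pushouts $(O_1\sqcup_{B_{12}}O_2)\sqcup_{B_{23}}O_3$ and $O_1\sqcup_{B_{12}}(O_2\sqcup_{B_{23}}O_3)$.

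The main step is then to invoke the standard pushout pasting lemma: whenever two pushout squares are composed along a shared edge, the outer rectangle is also a pushout. Applying this to either side of the associativity equation shows that each iterated pushout satisfies the universal property of the colimit of the zigzag diagram
$$O_1 \leftarrow B_{12} \rw O_2 \leftarrow B_{23} \rw O_3.$$
Since colimits are unique up to canonical isomorphism, I obtain mutually inverse ontology homomorphisms between the two iterated pushouts, exactly as in the commutativity argument in Lemma \ref{lemma:c} where the uniqueness of pushouts produced the required inverse pair. For the ``simultaneously undefined'' clause of $\eqref{eqn:sass}$, I reverse the construction: any $V$-alignment pair of $P_{12}$ with $O_3$ making the iterated pushout meaningful can be transported, via the injection $\iota_2^{12}$ into $P_{12}$, back to a $V$-alignment pair between $O_2$ and $O_3$, and symmetrically on the right-hand side. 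This shows definedness is symmetric across the two bracketings, so neither side can be defined without the other.

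The main obstacle is the implicit base-dependence of $\sqcup$: in the paper, $\sqcup$ is really a family $\sqcup_B$, and the unsubscripted operation relies on the convention (stated just before Definition \ref{def:valign}) that the base is chosen to identify as many overlaps as possible. Matching the bases on both sides of the equation requires showing that the maximal overlap used for $(O_1\sqcup O_2)\sqcup O_3$ induces compatible maximal overlaps for $O_2\sqcup O_3$, and vice versa. Once this bookkeeping is established, the categorical pasting argument proceeds essentially mechanically, mirroring the structure of the commutativity proof and using only the universal property of pushouts together with Lemma \ref{lemma:pushoutbyepic}.
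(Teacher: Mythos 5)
Your proof is correct and follows essentially the same route as the paper: the paper also fixes a $W$-alignment diagram, builds the object $(O_1\sqcup_{B_1}O_2)\sqcup_{O_2}(O_2\sqcup_{B_2}O_3)$ — which is exactly the colimit of your zigzag — and uses pushout pasting plus uniqueness of the universal property to identify it with each bracketing. The paper likewise dispatches the ``simultaneously undefined'' clause and the base-dependence bookkeeping with a one-line ``clearly,'' so your treatment of those points is no weaker than the original.
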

\begin{proof}
For all ontologies $O_1,O_2,O_3\in \mathfrak{O}$,
clearly either of $O_1\sqcup (O_2\sqcup O_3)$ and $(O_1\sqcup O_2)\sqcup O_3$ is defined so is the other.
Given ontology $V$-alignment pairs $(r_1:B_1\rw O_1,r_2:B_1\rw O_2)$ and $(r_3:B_2\rw O_2,r_4:B_2\rw O_3)$ 
that form a $W$-alignment diagram, it suffices to show that 
$O_1\sqcup_{B_1} (O_2\sqcup_{B_2} O_3)=(O_1\sqcup_{B_1} O_2)\sqcup_{B_2} O_3$,
up to isomorphism.

Form pushouts $(s_1)$, $(s_2)$, and $(s_3)$ and then the pushout $B_2, O_3, O_1\sqcup_{B_1}O_2, X$:
$$\xymatrix{
&&&& X \ar@{..>}@<0.4ex>[dll]^{\beta}\\
&& O_{123} \ar@{..>}@<0.4ex>[urr]^{\alpha} \\
& O_1\sqcup_{B_1}O_2 \ar[ur]_{j_1}  \ar@/^3pc/[uurrr]^{\iota_1}&& O_2\sqcup_{B_2}O_3\ar[ul]^{j_2}\\
O_1 \ar[ur]^{i_1} && O_2\ar[ul]_{i_2} \ar[ur]_{i_3} \ar@{}[uu]|{(s_3)} && O_3\ar[ul]_{i_4}\ar[uuu]_{\iota_2}\\
& B_1 \ar[ul]^{r_1}\ar[ur]_{r_2} \ar@{}[uu]|{(s_1)} && B_2\ar[ul]^{r_3}\ar[ur]_{r_4} \ar@{}[uu]|{(s_2)}\\
}$$
where $O_{123}=(O_1\sqcup_{B_1}O_2)\sqcup_{O_2}(O_2\sqcup_{B_2}O_3)$
and $X=(O_1\sqcup_{B_1}O_2)\sqcup_{B_2}O_3$.

Since $(s_2)+(s_3)$ is a pushout, there is a unique ontology homomorphism $\alpha: O_{123}\rw X$ such that
$$\iota_1=\alpha j_1\mbox{ and }\iota_2=\alpha j_2i_4.$$
Since $B_2, O_3, O_1\sqcup_{B_1}O_2, X$ is a pushout, there is a unique ontology homomorphism $\beta:X\rw O_{123}$
such that
$$j_1=\beta \iota_1\mbox{ and }j_2i_4=\beta \iota_2.$$
Hence
$$\alpha\beta\iota_1=\alpha j_1=\iota_1\mbox{ and }\alpha\beta\iota_2=\alpha j_2i_4=\iota_2$$
and
$$\beta\alpha j_1=\beta\iota_1=j_1\mbox{ and }\beta\alpha j_2i_4=\beta \iota_2=j_2i_4$$
and therefore
$$\alpha\beta=1_{X}\mbox{ and }\beta\alpha=1_{O_{123}}.$$
So $(O_1\sqcup_{B_1}O_2)\sqcup_{B_2}=X\cong O_{123}.$

Similarly, we have $O_1\sqcup_{B_1}(O_2\sqcup_{B_2}O_3)\cong O_{123}$.
Thus, $O_1\sqcup_{B_1}(O_2\sqcup_{B_2}O_3)=(O_1\sqcup_{B_1}O_2)\sqcup_{B_2}O_3$ up to ontology isomorphism, as desired.
\end{proof}

%%%%%%%%%%%%%%%%%%%%%%%%%%%%%%%
\begin{lemma}\label{lemma:r}
$(\mathfrak{O}, \approx, \sqcup)$ satisfies \eqref{eqn:r}:
Given ontologies $O_1$, $O_2$, and $O$ such that $O_1\sqcup_{B} O_2$ exists, $O\approx O_1$ implies 
$O\approx O_1\sqcup_B O_2$ and $O_1\approx O$ implies 
$O_1\sqcup_B O_2\approx O$.
\end{lemma}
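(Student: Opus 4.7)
The plan is to exploit the canonical morphism $i_{r_2}: O_1 \rw O_1 \sqcup_B O_2$ that comes for free from the pushout defining $O_1 \sqcup_B O_2$ (and symmetrically $i_{r_1}: O_2 \rw O_1 \sqcup_B O_2$), and simply compose it with whatever witnessing alignment we start with. This turns a $V$-alignment pair into $O_1$ (or out of $O_1$) into a $V$-alignment pair into $O_1 \sqcup_B O_2$ (resp.\ out of it) without any universal-property reasoning.

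For left representativity, suppose $O \approx O_1$, so by Definition \ref{def:valign} there exists some base ontology $B'$ together with an ontology $V$-alignment pair $(s_1: B' \rw O,\, s_2: B' \rw O_1)$. Since $O_1 \sqcup_B O_2$ exists, the pushout square supplies an ontology homomorphism $i_{r_2}: O_1 \rw O_1 \sqcup_B O_2$. Because compositions of ontology homomorphisms are again ontology homomorphisms, the pair
$$(s_1: B' \rw O,\; i_{r_2} \circ s_2: B' \rw O_1 \sqcup_B O_2)$$
is a $V$-alignment pair with common domain $B'$. Hence $O \approx_{B'} O_1 \sqcup_B O_2$, which gives $O \approx O_1 \sqcup_B O_2$, as required.

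For right representativity the argument is strictly symmetric: given a $V$-alignment pair $(t_1: B' \rw O_1,\, t_2: B' \rw O)$ witnessing $O_1 \approx O$, we post-compose $t_1$ with the same pushout injection $i_{r_2}: O_1 \rw O_1 \sqcup_B O_2$ to obtain the $V$-alignment pair $(i_{r_2} \circ t_1: B' \rw O_1 \sqcup_B O_2,\, t_2: B' \rw O)$, which witnesses $O_1 \sqcup_B O_2 \approx O$. Combining the two directions yields (R).

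I do not expect any real obstacle here: no universal property of the pushout is invoked beyond the bare existence of the structural injections, and no commuting diagram has to be checked because the new $V$-alignment pair only needs a common domain, not any compatibility condition. The only thing to be a little careful about is to state the construction with the correct injection ($i_{r_2}$ rather than $i_{r_1}$, matching the side on which $O_1$ sits in the pushout square of Definition \ref{def:valign}(\ref{item:valigncop})), and to note that the same argument applied with $i_{r_1}$ shows representativity relative to $O_2$, which is subsumed by commutativity (Lemma \ref{lemma:c}).
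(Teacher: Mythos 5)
Your proof is correct, but it takes a genuinely different and more elementary route than the paper's. You witness $O \approx O_1\sqcup_B O_2$ by keeping the original base $B'$ and post-composing $s_2:B'\rw O_1$ with the pushout coprojection $i_{r_2}:O_1\rw O_1\sqcup_B O_2$; since Definition \ref{def:valign} asks only for a pair of ontology homomorphisms with a common domain (no compatibility condition), the pair $(s_1,\,i_{r_2}\circ s_2)$ immediately does the job, and the right-handed case is symmetric. The paper instead first forms the pullback $C$ of $s_2:B'\rw O_1$ and $r_1:B\rw O_1$, and uses $(s_1\pi_1:C\rw O,\ \iota_1 r_1\pi_2:C\rw X)$ as the new alignment pair; by the commutativity of the pullback and pushout squares this is exactly your pair precomposed with $\pi_1:C\rw B'$, so the two constructions produce essentially the same alignment, just based differently. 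Your version buys simplicity and generality: it needs nothing beyond composition of morphisms, whereas the paper's argument tacitly assumes the category of ontologies has pullbacks (a hypothesis not stated in the lemma). What the paper's pullback base arguably buys is that $C$ records only the part of $B'$ that factors through the original merging base $B$, which is closer in spirit to the informal convention that overlaps are ``identified as many as possible,'' but this plays no role in the formal statement being proved. Your observation that representativity on the $O_2$ side follows from $i_{r_1}$ together with commutativity (Lemma \ref{lemma:c}) is also consistent with how the paper organizes these facts.
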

\begin{proof}
Since $O_1\sqcup_{B} O_2$ exists and $O\approx O_1$, $O_1\sqcup _BO_2=X$ is given by
the pushout $(*)$ and there is an ontology $V$-alignment pair $(s_1:B'\rw O, s_2:B'\rw O_1)$:
$$\xymatrix{
&&& X\\
O && O_1 \ar@{..>}[ur]_{\iota_1}  \ar@{}[rr]|{(*)} && O_2\ar@{..>}[ul]^{\iota_2}\\
& B' \ar[ul]^{s_1}\ar[ur]_{s_2} \ar@{}[rr]|{(+)}&& B\ar[ul]^{r_1}\ar[ur]_{r_2}\\
&& C\ar@{..>}[ul]^{\pi_1}\ar@{..>}[ur]_{\pi_2}\\
}$$
here $(+)$ is a pullback.
Hence there is an ontology $V$-alignment pair:
$$\xymatrix{
O && X \\
& C \ar[ul]^{s_1\pi_1} \ar[ur]_{\iota_1r_1\pi_2}
}$$
and therefore
$O\approx O_1\sqcup_B O_2$.
\end{proof}

%%%%%%%%%%%%%%%%%%%%%%%%%%%%%%%
\begin{lemma}\label{prop:parsemigroup}
$(\mathfrak{O},\sqcup)$ is a partial idempotent commutative semigroup.
\end{lemma}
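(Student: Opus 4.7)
The plan is to assemble this result directly from the three preceding lemmas in Appendix \ref{append:lemmas}, since ``partial idempotent commutative semigroup'' is simply shorthand for a set with a partial binary operation satisfying idempotence, commutativity, and associativity.

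First, I would unpack the definition: a partial semigroup $(\mathfrak{O},\sqcup)$ is a non-empty set with a partial binary operation $\sqcup$ such that, whenever both sides of the associativity equation are defined, they agree; it is commutative when $O_1 \sqcup O_2$ is defined iff $O_2 \sqcup O_1$ is defined and the two are equal; and it is idempotent when $O \sqcup O = O$ for every $O$. With this definition in hand, idempotence follows immediately from Lemma \ref{lemma:i}, commutativity follows immediately from Lemma \ref{lemma:c}, and associativity in the strong sense $\eqref{eqn:sass}$ follows from Lemma \ref{lemma:ass}.

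Next I would invoke Proposition \ref{proposition:assrel}(\ref{remark: assextension}) to note that $\eqref{eqn:sass}$ is precisely the condition that makes the null extension $(\mathfrak{O}^{\uparrow}, \sqcup^{\uparrow})$ a total semigroup; in particular, it is the strongest form of associativity compatible with $\sqcup$ being partial, and it trivially entails $\eqref{eqn:a}$ by Proposition \ref{proposition:assrel}(\ref{remark: assimplya}). This is more than enough to qualify $(\mathfrak{O},\sqcup)$ as a partial semigroup in the standard sense.

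There is no real obstacle here; the proof is essentially a one-line citation, so the only care needed is to state the three properties in the correct order and to remark that the associativity used is $\eqref{eqn:sass}$, which is the formulation that fits the usual definition of a partial semigroup (``both sides defined and equal, or both undefined''). I would therefore keep the proof to a single short paragraph that simply says: by Lemmas \ref{lemma:i}, \ref{lemma:c}, and \ref{lemma:ass}, the partial operation $\sqcup$ is idempotent, commutative, and strongly associative, so $(\mathfrak{O},\sqcup)$ is a partial idempotent commutative semigroup, as claimed.
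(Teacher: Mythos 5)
Your proposal is correct and matches the paper's own proof, which is exactly the one-line citation of Lemmas \ref{lemma:i}, \ref{lemma:c}, and \ref{lemma:ass}. The extra unpacking of the definition and the remark via Proposition \ref{proposition:assrel}(\ref{remark: assextension}) is harmless but not needed.
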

\begin{proof}
By Lemmas \ref{lemma:i}, \ref{lemma:c}, and \ref{lemma:ass}.
\end{proof}

%%%%%%%%%%%%%%%%%%%%%%%%%%%%%%%%%%%%%%%%%%%%%%%%%%%%%%%%%%%%%
\section{Proofs}\label{append:theorems}
%%%%%%%%%%%%%%%%%%%%%%%%%%%%%%%%%%%%%%%%%%%%%%%%%%%%
\noindent {\bf Proposition \ref{proposition:assrel}.}
{\em
Let $(\mathfrak{O}, \sim,\merge)$ be an ontology merging system.
\begin{enumerate}[$(i)$]
\item\label{remark: assimplya}
If $(\mathfrak{O},\sim,\merge)$ satisfies $\eqref{eqn:sass}$, then $(\mathfrak{O},\sim,\merge)$ satisfies $\eqref{eqn:a}$;
\item\label{remark: cassimplya}
If $(\mathfrak{O},\sim,\merge)$ satisfies $\eqref{eqn:cass}$, then $(\mathfrak{O},\sim,\merge)$ satisfies $\eqref{eqn:a}$;
\item\label{remark: assequiv}
If $(\mathfrak{O},\sim,\merge)$ satisfies $\eqref{eqn:c}$, then
$\eqref{eqn:cass}$ is equivalent to $\eqref{eqn:a}$ and $\eqref{eqn:r}$;
\item\label{remark: assextension}
$(\mathfrak{O},\sim,\merge)$ satisfies $\eqref{eqn:sass}$ if and only if the null extension $(\mathfrak{O}^{\uparrow}, \sim^{\uparrow}, \merge^{\uparrow})$ is a total ontology merging system such that
$(O_1\merge^{\uparrow}O_2) \merge^{\uparrow}O_3=O_1 \merge^{\uparrow}(O_2 \merge^{\uparrow} O_3)$ for all $O_1,O_2,O_3\in \mathfrak{O}^{\uparrow}$, namely, $(\mathfrak{O}^{\uparrow},\merge^{\uparrow})$ is a semigroup.
\end{enumerate}
}
\begin{proof}%[Proof of Proposition $\ref{proposition:assrel}$]
$(\ref{remark: assimplya})$ and $(\ref{remark: cassimplya})$ are obvious.

$(\ref{remark: assequiv})$: 
Suppose that  $\eqref{eqn:cass}$ holds true. By $(\ref{remark: cassimplya})$, $\eqref{eqn:a}$ follows.
If $O_3=O_1\merge O_2$ and $O\sim O_1$, then, by $\eqref{eqn:cass}$,
$$O\merge (O_1\merge O_2)=(O\merge O_1)\merge O_2\neq \;\uparrow.$$
Hence $O\sim O_1\merge O_2$ and therefore $\eqref{eqn:rr}$ is satisfied.
Similarly, $\eqref{eqn:rl}$ is also satisfied. Thus, $\eqref{eqn:r}$ holds true.

Conversely, suppose that both $O_1\merge O_2$ and $O_2\merge O_3$ exist. By $\eqref{eqn:r}$ and $\eqref{eqn:c}$,
$O_1\merge O_2\sim O_3$ and $O_1\sim O_2\merge O_3$. Hence
$(O_1\merge O_2)\merge O_3$ and $O_1\merge (O_2\merge O_3)$ exist and therefore, by $\eqref{eqn:a}$,
$$(O_1\merge O_2)\merge O_3=O_1\merge (O_2\merge O_3)\neq\;\uparrow.$$
So $\eqref{eqn:cass}$ holds true.

$(\ref{remark: assextension})$:
If $(\mathfrak{O}^{\uparrow}, \merge^{\uparrow})$ is a semigroup, namely, $\merge^{\uparrow}$ is an associative operation
on $\mathfrak{O}^{\uparrow}$, then, for all $O_1,O_2,O_3\in 
\mathfrak{O}\subseteq \mathfrak{O}^{\uparrow}$,
$$(O_1\merge^{\uparrow}O_2)\merge^{\uparrow}O_3=O_1\merge^{\uparrow}(O_2\merge^{\uparrow}O_3)
=O\in \mathfrak{O}^{\uparrow}.$$
If $O=\;\uparrow$, then 
$$(O_1\merge O_2)\merge O_3=O_1\merge(O_2\merge O_3)
=\;\uparrow.$$
If $O\neq\;\uparrow$, then 
$$(O_1\merge O_2)\merge O_3=O_1\merge(O_2\merge O_3)
\neq\;\uparrow.$$
Hence 
$(\mathfrak{O},\sim,\merge)$ satisfies $\eqref{eqn:sass}$.

Conversely, suppose now that $(\mathfrak{O},\sim,\merge)$ satisfies $\eqref{eqn:sass}$.
For all $O_1,O_2,O_3\in\mathfrak{O}^{\uparrow}$,
if $\uparrow\in\{O_1,O_2,O_3\}$, then
$$(O_1\merge^{\uparrow}O_2)\merge^{\uparrow}O_3=O_1\merge^{\uparrow}(O_2\merge^{\uparrow}O_3)
=\;\uparrow\;\in \mathfrak{O}^{\uparrow}.$$

If $\uparrow\;\notin\{O_1,O_2,O_3\}$, then, by $\eqref{eqn:sass}$,
$$(O_1\merge O_2)\merge O_3=O_1\merge(O_2\merge O_3)
=\;\uparrow\;\mbox{ or } \neq \;\uparrow.$$
Hence
$$(O_1\merge^{\uparrow}O_2)\merge^{\uparrow}O_3=O_1\merge^{\uparrow}(O_2\merge^{\uparrow}O_3)
\in \mathfrak{O}^{\uparrow}$$
and therefore $\merge^{\uparrow}$ is associative. Thus,
$(\mathfrak{O}^{\uparrow}, \merge^{\uparrow})$ is a semigroup, as desired.
\end{proof}

%%%%%%%%%%%%%%%%%%%%%%%%%%%%%
\noindent {\bf Proposition \ref{proposition:naturalposet}.}
{\em If $(\mathfrak{O}, \sim, \merge)$ satisfies $\eqref{eqn:i}$ and $\eqref{eqn:cass}$,
then $\leq_{\merge}$ is a partial order on $\mathfrak{O}$ and so $(\mathfrak{O},\leq_{\merge})$ is a poset.
}
\begin{proof}%[Proof of Proposition $\ref{proposition:naturalposet}$]
It suffices to prove that $\leq_{\merge}$ is reflexive, transitive, and antisymmetric.

By $\eqref{eqn:i}$, $O\merge O = O$ and so $O\leq_{\merge} O$ for all $O\in \mathfrak{O}$.
Hence $\leq_{\merge}$ is reflexive.

Assume that $O_1\leq_{\merge} O_2$ and $O_2\leq_{\merge} O_3$. 
Then $O_1\sim O_2$, $O_2\sim O_1$, $O_2\sim O_3$, $O_3\sim O_2$, 
$O_1\merge O_2 =O_2\merge O_1= O_2$, and $O_2\merge O_3 = O_3\merge O_2 = O_3$. By $\eqref{eqn:cass}$,
$$O_1\merge O_3=O_1\merge (O_2\merge O_3)=(O_1\merge O_2)\merge O_3=O_2\merge O_3=O_3$$
and
$$O_3\merge O_1= (O_3\merge O_2)\merge O_1=O_3\merge (O_2\merge O_1)=O_3\merge O_2=O_3.$$
Hence $O_1\sim O_3$, $O_3\sim O_1$, and $O_1\leq_{\merge} O_3$ and therefore $\leq_{\merge}$ is transitive.

If $O_1\leq_{\merge} O_2$ and $O_2\leq_{\merge} O_1$, then
$O_1\merge O_2 = O_2\merge O_1=O_2$ and $O_1\merge O_2=O_2\merge O_1 = O_1$ and so
$O_1=O_1\merge O_2 = O_2$. Thus,
$\leq_{\merge}$ is antisymmetric.
\end{proof}

%%%%%%%%%%%%%%%%%%%%%%%%%%%%%%%%%%%%%%%%%%%%%%%%%%%%%%%%%%%%%%%%%%
\noindent {\bf Theorem \ref{thm:partialordersequal}.}
{\em Let $\sim$ be reflexive and  commutative and $\preceq$ a partial order on $(\mathfrak{O},\sim,\merge)$. Then  
$(\mathfrak{O},\sim,\merge)$ satisfies $\eqref{eqn:lub}$ and $\eqref{eqn:comp}$ with respect to $\preceq$ if and only if 
$(\mathfrak{O},\sim,\merge)$ satisfies $\eqref{eqn:i}$, $\eqref{eqn:c}$, $\eqref{eqn:a}$, 
and $\eqref{eqn:r}$ and $\preceq\;=\;\leq_{\merge}$.
}
\begin{proof}%[Proof of Theorem $\ref{thm:partialordersequal}$]
``only if": For each ontology $O\in \mathfrak{O}$, since $\sim$ is reflexive, $O\sim O$.
By $\eqref{eqn:lub}$, $O\preceq O\merge O$.
Since $O\preceq O$ and $O\merge O$ is the least upper bound of $O$ and $O$, 
$O\merge O\preceq O$. Since $\preceq$ is a partial order, $O\merge O = O$. Hence $\eqref{eqn:i}$ holds.

If $O_1\sim O_2$, then, clearly, $O_2\sim O_1$ as $\sim$ is commutative. Since 
$O_2\preceq O_1\merge O_2\mbox{ and }O_1\preceq O_1\merge O_2$, by $\eqref{eqn:lub}$ 
we have $O_2\merge O_1\preceq O_1\merge O_2$.
Similarly, $O_1\merge O_2\preceq O_2\merge O_1$. Then $O_1\merge O_2=O_2\merge O_1$ and so
$\eqref{eqn:c}$ is satisfied.

Suppose that $O_1\merge O_2$ exists and $O_1\sim O$. Since $O_1\preceq O_1\merge O_2$, $O_1\merge O_2\sim O$ 
by $\eqref{eqn:rcomp}$.
Hence $\eqref{eqn:rr}$ holds. Similarly, $\eqref{eqn:rl}$ holds true. Thus, $\eqref{eqn:r}$ is satisfied.

Assume that both $(O_1\merge O_2)\merge O_3$ and $O_1\merge (O_2\merge O_3)$ exist. 
Since $O_3\preceq O_2\merge O_3$, by $\eqref{eqn:lcomp}$,
$$(O_1\merge O_2)\merge O_3\preceq (O_1\merge O_2)\merge (O_2\merge O_3).$$
Clearly, $O_2\merge O_3\preceq (O_1\merge O_2)\merge O_3$
and $O_1\merge O_2\preceq (O_1\merge O_2)\merge O_3$. 
So $(O_1\merge O_2)\merge (O_2\merge O_3)\preceq (O_1\merge O_2)\merge O_3$ by $\eqref{eqn:lub}$. 
Hence $(O_1\merge O_2)\merge O_3 = (O_1\merge O_2)\merge (O_2\merge O_3)$.
Symmetrically, $O_1\merge (O_2\merge O_3) = (O_1\merge O_2)\merge (O_2\merge O_3)$. Then 
$O_1\merge (O_2\merge O_3) = (O_1\merge O_2)\merge O_3$ and so $\eqref{eqn:a}$ is satisfied.

Assume now that $O_1\leq_{\merge} O_2$. Then $O_2=O_1\merge O_2=O_2\merge O_1$. Hence
$O_1\preceq O_1\merge O_2= O_2$ by $\eqref{eqn:lub}$. On the other hand, suppose that $O_1\preceq O_2$. Then,
by $\eqref{eqn:lub}$, $O_2\preceq O_1\merge O_2$. Hence $O_2=O_1\merge O_2$ 
as $O_1\merge O_2\preceq O_2\merge O_2 = O_2$ 
by $\eqref{eqn:rcomp}$.
Similarly, $O_2=O_2\merge O_1$. Thus, $O_1\leq_{\merge} O_2$, as desired.
So $\preceq \;=\; \leq_{\merge}$.

``if": Suppose now that $\preceq \;=\; \leq_{\merge}$ and $(\mathfrak{O},\sim,\merge)$ satisfies 
$\eqref{eqn:i}$, $\eqref{eqn:c}$, $\eqref{eqn:a}$, 
and $\eqref{eqn:r}$. 
By Proposition $\ref{proposition:assrel}(\ref{remark: assequiv})$,
$(\mathfrak{O},\sim, \merge)$ satisfies $\eqref{eqn:cass}$
and so $\leq_{\merge}$ is a partial order by Proposition $\ref{proposition:naturalposet}$.

Let $O_1,O_2\in \mathfrak{O}$ be such that $O_1\sim O_2$. 
Then $O_1\merge O_2$ and $O_2\merge O_1$ exist and equal by $\eqref{eqn:c}$. Since
$$O_1\merge (O_1\merge O_2)=(O_1\merge O_1)\merge O_2=O_1\merge O_2$$
and
$$(O_1\merge O_2)\merge O_1=O_1\merge (O_2\merge O_1)=O_1\merge (O_1\merge O_2)=(O_1\merge O_1)\merge O_2=O_1\merge O_2,$$
we have $O_1\leq_{\merge} O_1\merge O_2$. Similarly, $O_2\leq_{\merge} O_1\merge O_2$. 
Hence $O_1\merge O_2$ is an upper bound of $O_1$ and $O_2$.

Assume that $O_1\leq_{\merge} O$ and $O_2\leq_{\merge} O$. 
Then $O_1\merge O=O\merge O_1=O$ and $O_2\merge O=O\merge O_2=O$. Hence
$$O=O_1\merge O=O_1\merge (O_2\merge O)=(O_1\merge O_2)\merge O$$
and 
$$O=O\merge O_1= (O\merge O_2)\merge O_1=O\merge (O_2\merge O_1)= O\merge (O_1\merge O_2)$$
and therefore $O_1\merge O_2\leq_{\merge} O$. Then $O_1\merge O_2$ is the least upper bound of $O_1$ and $O_2$
and so $\eqref{eqn:lub}$ holds.

For all ontologies $O_1$, $O_2$, $O$ such that $O_1\leq_{\merge} O_2$ and $O\sim O_1$, 
obviously $O\sim O_1\merge O_2=O_2$ by $\eqref{eqn:r}$. Hence $\eqref{eqn:rcomp}$ is satisfied.
Similarly, $\eqref{eqn:lcomp}$ is also satisfied. Thus, $\eqref{eqn:comp}$ holds true.
\end{proof}

%%%%%%%%%%%%%%%%%%%%%%%%%%%%%%%%%%%%%%%%%%%%%%%%%%%%%%%%%%%%%
\noindent {\bf Theorem \ref{thm:mergeclosure}.}
{\em Given a repository $\mathbb{O}\subseteq \mathfrak{O}$. 
the merging closure $\widehat{\mathbb{O}}$ exists and it is unique.
}
\begin{proof}%[Proof of Theorem $\ref{thm:mergeclosure}$]
Let 
$$\mathcal{C}\stackrel{\textrm{\scriptsize def}}{=}
\big\{ \mathbb{P} \;|\;\mathbb{O}\subseteq \mathbb{P}\mbox{ and }\mathbb{P}\mbox{ is closed with respect to merging}\big\}.$$
Clearly, $\mathfrak{O}\in \mathcal{C}\neq \emptyset$ and $\mathcal{C}$ is partially ordered by $\subseteq$.
For each chain $\cdots\subseteq \mathbb{P}_{\alpha}\subseteq \cdots\subseteq \mathbb{P}_{\beta}\cdots$, $\cap\mathbb{P}_{\alpha}\in \mathcal{C}$ and it is a lower bound of the totally ordered subset 
$\{\cdots,\mathbb{P}_{\alpha},\cdots,\mathbb{P}_{\beta},\cdots\}$.
Thus, $\mathcal{C}$ has a minimal element, 
$\cap_{\mathbb{P}\in\mathcal{C}}\mathbb{P}$.

Let $\mathbb{P}_1$ and $\mathbb{P}_2$ be two minimal elements for $(\mathcal{C},\subseteq)$.
Then $\mathbb{P}_1\cap \mathbb{P}_2$ is closed with respect to ontology merging $\merge$ and smaller than
$\mathbb{P}_1$ and $\mathbb{P}_2$ and so
$\mathbb{P}_1=\mathbb{P}_1\cap \mathbb{P}_2=\mathbb{P}_2$.
The uniqueness follows. The unique minimal element of $(\mathcal{C},\subseteq)$ is 
the merging closure $\widehat{\mathbb{O}}$:
$$\widehat{\mathbb{O}}=\bigcap_{\mathbb{P}\in\mathcal{C}}\mathbb{P}=
\bigcap\{ \mathbb{P} \;|\;\mathbb{O}\subseteq \mathbb{P}\mbox{ and }\mathbb{P}\mbox{ is closed with respect to merging}\}\neq \emptyset.$$
\end{proof}

%%%%%%%%%%%%%%%%%%%%%%%%%%%%%%%%%%%%%%%%%%%%%%%%%%%%%%%%%%%%
\noindent {\bf Proposition \ref{prop:epiciso}.}
{\em If $f$ is an epic ontology homomorphism such that
\begin{equation}\label{diag:vpairmap}
\xymatrix{
&& O_1\\
B\ar[urr]^{r_1}\ar[drr]_{r_2}\ar[r]|f & B'\ar[dr]^{r_2'}\ar[ur]_{r_1'}\\
&& O_2\\
}
\end{equation}
commutes in the category of the ontologies concerned, then $O_1\sqcup_BO_2\cong O_1\sqcup_{B'}O_2$.
}
\begin{proof}%[Proof of Proposition $\ref{prop:epiciso}$]
Since the diagram $(\ref{diag:vpairmap})$ commutes, 
$(1_{O_1},1_{O_2},f):(r_1:B\rw O_1, r_2:B\rw O_2)\rw (r_1':B'\rw O_1, r_2':B'\rw O_2)$ is
a homomorphism between ontology $V$-alignment pairs. Hence there is a unique ontology homomorphism 
$f_*:O_1\sqcup_BO_2\rw O_1\sqcup_{B'}O_2$ such that the diagram 
$(\ref{diag:diag:mergemap11})$ commutes.
If $f$ is epic, then, by Lemma \ref{lemma:pushoutbyepic},
$B', O_1,O_1\sqcup_B O_2,O_2$ is commutative and a pushout and so there is a unique ontology homomorphism 
$g:O_1\sqcup_{B'}O_2\rw O_1\sqcup_BO_2$ such that
$$g\iota_1'=\iota_1\mbox{ and }g\iota_2'=\iota_2.$$
$$\xymatrix{
&& O_1\ar[dr]^{\iota_1}\ar@/^1pc/[drr]^{\iota_1'}\\
B\ar[r]|f \ar@/^1pc/[urr]^{r_1}\ar@/_1pc/[drr]_{r_2} & B' \ar[ur]^{r_1'}\ar[dr]_{r_2'}
        && \hspace{-8mm}O_1\sqcup_B O_2 \ar@<0.6ex>@{.>}[r]^(.4){f_*} 
                    & O_1\sqcup_{B'}O_2  \ar@<0.6ex>@{.>}[l]^(0.6){g} \\
&& O_2\ar[ur]_{\iota_2}\ar@/_1pc/[urr]_{\iota_2'}\\
}$$
Hence 
$$f_*g\iota_1'=f_*\iota_1=\iota_1'\mbox{ and }f_*g\iota_2'=f_*\iota_2=\iota_2'$$
and therefore $f_*g=1_{O_1\sqcup_{B'}O_2}$.
Similarly, we have $gf_*=1_{O_1\sqcup_BO_2}$ from $f_*\iota_1=\iota_1'\mbox{ and }f_*\iota_2=\iota_2'$.
Then $f_*$ is an ontology isomorphism and so $O_1\sqcup_BO_2\cong O_1\sqcup_{B'}O_2$.
\end{proof}

%%%%%%%%%%%%%%%%%%%%%%%%%%%%%%%%%%%%%%%%%%%%%%%%%%%%%%%%%%%%%%%%%%%%%%%%%%
\noindent {\bf Proposition \ref{prop:<=hom}.}
{\em In the ontology merging system $(\mathfrak{O}, \approx, \sqcup)$,
there is an ontology $B$ such that $O_1\leq_{\sqcup_B} O_2$ if and only if
there is an ontology homomorphism $h:O_1\rw O_2$.
}
\begin{proof}%[Proof of Proposition $\ref{prop:<=hom}$]
``{\em if}": Suppose that there exists an ontology homomorphism $h:O_1\rw O_2$. Then
we have a commutative square:
\begin{equation}\label{equ:pushout<=}
\xymatrix{
O_1\ar[r]^h \ar@{=}[d] & O_2\ar@{=}[d]\\
O_1 \ar[r]^h & O_2
}
\end{equation}

For any ontology homomorphisms $x_1:O_1\rw X$ and $x_2:O_2\rw X$ such that $x_1=x_2h$, clearly there is a unique 
ontology homomorphism $x_2:O_2\rw X$ making
$$\xymatrix{
O_1\ar[r]^h \ar@{=}[d] & O_2\ar@{=}[d]\ar@/^/[ddr]^{x_2}\\
O_1 \ar[r]^h \ar@/_/[drr]_{x_1}& O_2\ar@{..>}[dr]|{\exists !x_2}\\
&& X
}$$
commute. Hence $(\ref{equ:pushout<=})$ is a pushout and therefore $O_1\sqcup_{O_1} O_2=O_2\sqcup_{O_1} O_1=O_2$. Thus, there exists an ontology $O_1$ such that $O_1\leq_{\sqcup_{O_1}} O_2$, as desired.

``{\em only if}":
Assume now that $O_1\leq_{\sqcup_B}O_2$.
Then $O_1\approx_B O_2$ and $O_1\sqcup_B O_2=O_2\sqcup_B O_1=O_2$ and so there is a pushout 
in the category of the ontologies concerned:
$$\xymatrix{
B\ar[r]^{r_2} \ar[d]_{r_1} & O_2\ar@{=}[d]\\
O_1 \ar[r]^h & O_2
}$$
Hence there exists an ontology homomorphism $h:O_1\rw O_2$.
\end{proof}

\end{appendices}

%%%%%%%%%%%%%%%%%%%%%%%%%%%%%%%

\end{document}